\setlist[itemize]{noitemsep, topsep=3pt}
\definecolor{calmgreen}{RGB}{220, 245, 220}
\definecolor{calmred}{RGB}{245, 220, 220}
\definecolor{calmgray}{RGB}{240, 240, 240}
\title{Learning to Reason Efficiently with Discounted Reinforcement Learning}
\author{Alex Ayoub\thanks{Work done during internship at Amazon} \\
Amazon, University of Alberta
\\ 
\texttt{aayoub@ualberta.ca}
\And
Kavosh Asadi\\
Amazon
\And
Dale Schuurmans \\
University of Alberta
\And
Csaba Szepesvári \\
University of Alberta
\And
Karim Bouyarmane \\
Amazon
}
\begin{document}
\maketitle

\begin{abstract}
Large reasoning models (LRMs) often consume excessive tokens, inflating computational cost and latency. More broadly, in goal reaching sequential decision problems we often want to reach the goal quickly, and LRM reasoning can be viewed through this lens. We challenge the assumption that longer responses improve accuracy. By penalizing reasoning tokens using a discounted reinforcement learning setup (interpretable as a small token cost) and analyzing Blackwell optimality in restricted policy classes, we encourage concise yet accurate reasoning, analogous to preferring shorter successful trajectories in a stochastic shortest path problem. Experiments confirm our theoretical results that this approach shortens chains of thought while preserving accuracy.

\end{abstract}

\section{Introduction}
Many sequential decision problems ask for policies that reach a goal quickly while maintaining a high probability of success. Large reasoning models (LRMs) increasingly solve math and code problems by emitting intermediate reasoning tokens before a final answer \citep{jaech2024openai}, where the goal is a correct terminal response. Reinforcement learning (RL) post training \citep{sutton2018reinforcement} improves accuracy but can lengthen responses \citep{liu2025scaling}, raising inference cost and latency. Our objective is to train LRMs that reach correct answers effectively and efficiently: more concise reasoning with no loss in accuracy.

Longer chains of thought \citep{wei2022chain} are not free: they inflate compute and memory (quadratic attention and a growing key value (KV) cache), slow inference and reduce serving throughput. This makes the problem closely related to finding a stochastic shortest path (SSP): each additional reasoning token is an extra step, and we would like the shortest successful trajectory to a correct terminal state. Moreover, the role of length in accuracy is contested \citep{shao2024deepseekmath,liu2025prorl,lu2025prolonged,fatemi2025concise} with many claiming there is an inherent tradeoff between length and accuracy. In this work we show that, up to a regime determined by the model class and problem instance, there is no tradeoff between accuracy and path length. Namely, one can reduce response length up to a certain instance dependent threshold without seeing a drop in accuracy. After the response length dips below this threshold, then accuracy begins to dip.  

We model verifier based reasoning as a finite horizon Markov decision process (MDP) \citep{puterman2014markov} with a binary terminal reward, which can be viewed as a finite-horizon stochastic shortest path instance where the goal is terminal correctness and the path cost is reasoning length. We then train with a discount factor $\gamma<1$. This design is motivated by Blackwell optimality \citep{blackwell,puterman2014markov,grand2023reducing}: near $\gamma=1$, discounting should preserve accuracy (goal reachability) while preferring shorter successful trajectories. In practice, we only apply discounting to the environment (correctness) reward. The amount of discounting depends only on reasoning length, leaving intrinsic formatting/shaping rewards undiscounted. Practically, we discount only reasoning tokens, regularize with a KL penalty to a moving reference policy \citep{peters2010relative} and ensure token budgets across methods are comparable for fair comparisons. Our contributions can be summarized as follows:
\begin{itemize}
\item Within any fixed (possibly restricted) policy class $\Pi$, we show that Blackwell optimal policies (optimal for all $\gamma$ sufficiently close to $1$) \emph{simultaneously} maximize undiscounted success (probability of reaching the correct terminal goal) and, among accuracy maximizers, minimize expected trajectory length (a stochastic shortest path criterion). Thus, up to a regime determined by the class, there is no tradeoff between accuracy and path length. Our result calls into question the claim that there is a tradeoff between accuracy and response length 
and establishes that one can shorten response length up to an instance dependent quantity as hypothesized by \cite{lee2025well}
\item For finite $\Pi$, a Blackwell factor $\gammabw<1$ exists such that $\gamma$ optimal policies are constant for all $\gamma\in(\gammabw,1)$ and equal the Blackwell optimal set. We bound how close to $1$ the discount must be to maintain accuracy while shortening average response length. This clarifies how to choose $\gamma$ when the deployment class is restricted.
\item Using group relative policy optimization (GRPO) \citep{shao2024deepseekmath} with the discounted objective, we substantially reduce mean response length on GSM8K, MATH and additional math benchmarks while matching the undiscounted pass@1 baseline, in line with the shortest path prediction at fixed success probability.
\end{itemize}

Efficient goal-reaching (and, in particular, efficient reasoning) has been pursued via: (i) \emph{RL with length based penalties}, which adds per token or per step penalties during policy optimization \citep{arora2025training,su2025thinking,ling2025fast,xiang2025just}; (ii) \emph{curated data approaches}, which fine tune on variable length or compressed traces to internalize concise reasoning \citep{fatemi2025concise,hammoud2025train,qiao2025concise,lu2025prolonged,zhao2025saber,shrivastava2025sample,dai2025s}; and (iii) \emph{prompt control}, which prompts the model to reason more concisely \cite{aggarwal2025l1,dumitru2025conciserl,wu2025effectively}. We propose and analyze plain old {discounting} as a principled, instance aware mechanism. In finite horizon MDPs with binary terminal reward, maximizing the discounted correctness reward and minimizing expected path length coincide as the discount factor approaches one. Moreover, a small per step negative reward in this setting is equivalent to discounting \citep{bertsekas2012dynamic}. See \citet{sui2025stop} for a broader overview of efficient reasoning methods.

\section{Setting and Notation}
We model reasoning as a finite horizon discounted Markov decision process (MDP) \citep{puterman2014markov} which is given by the tuple
$M=(\mcS,\mcA,P,r,H,\gamma,\mu)$. Here $\mcS$ and $\mcA$ are finite state and action spaces, $P:\mcS\times\mcA\to\Delta(\mcS)$ is the transition kernel, $r:\mcS\times\mcA\to\mbR$ is a bounded reward (verifier), $H\in\NN$ is the horizon, $\gamma\in[0,1)$ is the discount factor,\footnote{We use $\gamma\in[0,1)$ for analysis; when defining \emph{accuracy} we also consider $\gamma=1$ in finite horizon.} and $\mu\in\Delta(\mcS)$ is the distribution over initial states (questions) where $\Delta(\mcS)$ is the set of probability distributions over states.

A (possibly nonstationary) policy $\pi=(\pi_t)_{t=1}^H$ consists of maps $\pi_t(\cdot\mid s)\in\Delta(\mcA)$ for each $t$.  Fixing the start state, $s$, a policy (or language model) induces a distribution $\PP_{\pi,s}$ over trajectories
\[
S_1,A_1,R_1,\ldots,S_{H},A_H,R_H,S_{H+1},
\quad
A_t\sim\pi_t(\cdot\mid S_t),\;
R_t=r(S_t,A_t),\;
S_{t+1}\sim P(S_t,A_t).
\]

The (discounted) state value function of $\pi$ is
\[
v_\gamma^\pi(s)\;=\;\EE_{\pi,s}\!\left[\sum_{t=1}^H \gamma^{t-1} R_t\right],
\]
where $\EE_{\pi,s}$ is the expectation corresponding to $\PP_{\pi,s}$.
The $\mu$ weighted return is
\[
J_\gamma(\pi)\;=\;\int v_\gamma^\pi(s)\,\mu(ds).
\]

\subsection{Language Modeling}
In language modeling, actions are vocabulary tokens and states are token sequences. The next state is the current sequence with the chosen token appended:
\[
S_{t+1}=P(S_t,A_t)=S_tA_t
\]
where we write $xy$ for the concatenation of $x$ and $y$. The special action $\eos$ ends the episode and moves to an absorbing terminal state. After taking $\eos$, the process remains in an absorbing state with zero reward for the remainder of the horizon. If $\eos$ is not emitted by time $H$, we deterministically transition to a terminal state that triggers the verifier.

In the so called RL with verifiable rewards (RLVR) \citep{lambert2024tulu}, the verifier returns $1$ if and only if the sequence at emission of $\eos$ contains a correct final answer and $0$ otherwise:
\[
r(S_t,\eos)=\mathbb{I}\{\text{$S_t$ contains a correct answer}\},\qquad
r(S_t,a)=0\ \ \text{for }a\neq\eos.
\]
Under this reward, the undiscounted finite horizon return equals the success probability. We therefore define the (Pass@1) {accuracy} of $\pi$ as
\[
\mathrm{Acc}(\pi)\;\coloneqq\;J_{1}(\pi)
\;=\;\int \PP_{\pi,s}\big(\text{correct within $H$}\big)\,\mu(ds),
\]
i.e., the fraction of prompts (under $\mu$) for which the {first} generated solution is verified correct.
\section{Blackwell optimality and our main theoretical results}
To formalize maximizing accuracy while minimizing mean response length, we use a stronger notion of optimality than is standard in reinforcement learning: the notion introduced by \citet{blackwell}, henceforth {Blackwell optimality} \citep{puterman2014markov,grand2023reducing}. A policy is Blackwell optimal if it is optimal for all discount factors sufficiently close to one. This is relevant because the optimal policy in goal reaching MDPs (RLVR) at $\gamma=1$ maximizes success (accuracy), while—as we show below—the optimal policy for $\gamma<1$ is the one that reaches the goal via the shortest path. If a policy is optimal both for $\gamma<1$ (near one) and for $\gamma=1$, then it simultaneously maximizes accuracy and minimizes mean response length. \emph{The missing proofs of all our results can be found in our appendix}. 

\paragraph{Why Blackwell optimality?}
Discounting with $\gamma<1$ breaks ties between equally successful policies by preferring earlier success, but if $\gamma$ is not sufficiently close to $1$ it may instead prefer a shorter yet less successful policy. The following example, with a restricted stochastic three-policy class, illustrates both effects. We consider restricted stochastic policy classes as this is a simplified model of softmax policy classes which are standard when analyzing policy gradient methods \citep{sutton2018reinforcement,pmlr-v119-mei20b}.

\begin{figure}[t]
\centering
\begin{tikzpicture}[
    >=Stealth, 
    node distance=1.5cm,
    every node/.style={font=\small},
    state/.style={circle, draw=black!60, thick, minimum size=22pt, inner sep=0.5pt, fill=white},
    goal/.style={state, fill=calmgreen, draw=green!40!black},
    fail/.style={state, fill=calmred, draw=red!40!black},
    abs/.style={circle, draw=black!40, thick, minimum size=18pt, inner sep=0.5pt, fill=calmgray, dashed},
    edge_label/.style={fill=white, inner sep=1pt, font=\footnotesize, text=black!80}
]

    \node[state] (s0) {$s_0$};
    
    \node[state] (s2) [right=2.8cm of s0] {$s_2$};
    \node[state] (s1) [right=2.8cm of s2] {$s_1$};
    
    \node[goal]  (g)  [above right=0.5cm and 1.5cm of s1] {$g$};
    \node[fail]  (f)  [below right=0.5cm and 1.5cm of s1] {$f$};
    
    \node[abs]   (abs) [right=3.2cm of s1] {$s_{\mathrm{abs}}$};

    
    \draw[->, thick] (s0) -- node[edge_label] {$a_3$} (s2);
    \draw[->, thick] (s2) -- node[edge_label] {$\mathsf{cont}$} (s1);

    \draw[->, thick] (s0) to[bend left=20] node[edge_label] {$a_2$} (s1);
    
    \draw[->, thick] (s1) -- node[edge_label] {$\mathsf{cont}$} (g);

    \draw[->, thick] (s0) to[out=25, in=175] node[edge_label, pos=0.5] {$a_1$} (g);
    
    \draw[->, thick] (s0) to[out=-25, in=-175] node[edge_label, pos=0.5] {$a_4$} (f);

    \draw[->, thick] (g) -- node[edge_label] {$a_{\mathrm{end}},\ \mathbf{r=1}$} (abs);
    \draw[->, thick] (f) -- node[edge_label] {$a_{\mathrm{end}},\ r=0$} (abs);
    
    \draw[->, thick] (abs) edge[loop right, min distance=8mm] node[font=\footnotesize, right] {$r=0$} (abs);

\end{tikzpicture}
\caption{A finite-horizon MDP illustrating the conflict between success probability and discounting. \textbf{Green ($g$)} indicates the goal state ($r=1$), while \textbf{Red ($f$)} indicates failure ($r=0$).}
\label{fig:toy-blackwell}
\end{figure}

\begin{proposition}
Fix $p\in(0,1)$ and $0<q_1<q_2<1$, and consider the MDP in Figure~\ref{fig:toy-blackwell} with horizon $H\ge 4$ and deterministic initial state $s_0$. Let the restricted policy class be $\Pi=\{\pi_1,\pi_2,\pi_3\}$, where at $s_0$: for $i\in\{1,2\}$, $\pi_i$ selects $a_3$ with probability $q_i$ and $a_2$ with probability $1-q_i$, and $\pi_3$ selects $a_1$ with probability $p$ and $a_4$ with probability $1-p$. Let $\tau(\pi)$ denote the time step at which $a_{\mathrm{end}}$ is taken under policy $\pi$. Then
\[
J_1(\pi_1)=J_1(\pi_2)=1,\quad \EE[\tau(\pi_i)]=3+q_i\ (i=1,2),\quad
J_1(\pi_3)=p\,,\quad \tau(\pi_3)=2\,.
\]
For all $\gamma\in[0,1)$,
\[
J_\gamma(\pi_i)=(1-q_i)\gamma^{2}+q_i\gamma^{3}\quad (i=1,2),\qquad
J_\gamma(\pi_3)=p\,\gamma\,.
\]
\end{proposition}
Thus there exists a threshold $\gamma'\in(p,1)$ such that for every $\gamma>\gamma'$, $\pi_1$ is both an optimal policy in $\Pi$ and a shortest path policy. This example motivates Blackwell optimality: it selects the shortest policy among success maximizers (as $\gamma\uparrow 1$), while excluding policies that become optimal only by sacrificing success probability at smaller $\gamma$.

\paragraph{Roadmap of the theory.}
Our analysis starts from a uniform Taylor expansion of $J_\gamma(\pi)$ around $\gamma=1$, which makes explicit the {stochastic shortest path (SSP)} structure induced by discounting: maximize success probability and, among maximizers, minimize successful path length. We then relate this SSP objective to Blackwell optimality, and finally establish existence of Blackwell optimal policies for restricted finite policy classes.

We now introduce the formal definition of a Blackwell optimal policy. 
Recall that we assume finite horizon $H<\infty$, finite state and action sets, and bounded rewards.
\begin{definition}
    Given $\gamma \in [0,1)$, a policy $\pi \in \Pi$ is $\gamma$ discount optimal if $J_\gamma(\pi) \geq J_\gamma(\pi')$ for all $\pi' \in \Pi$. We call $\Pi_\gamma^\star \subset \Pi$ the set of $\gamma$ discount optimal policies. 
\end{definition}
\begin{definition}[\cite{blackwell}]\label{maindef:blackwell}
    A policy $\pi$ is Blackwell optimal if there exists a $\gamma \in [0,1)$ such that $\pi \in \Pi_{\gamma'}^\star$ for all $\gamma' \in [\gamma,1)$. We call $\Pi^\star_{\mathrm{bw}}$ the set of Blackwell optimal policies. 
\end{definition} 
Note that our definition of optimality is with respect to both an MDP instance $M$ and a policy class $\Pi$, whereas the usual notions of optimality (and the existence of an optimal policy) \citep{puterman2014markov,Bertsekas2019reinforcement,szepesvari2022algorithms} depend only on the MDP $M$. 

\subsection{Main theoretical results}

We now specialize to the setting of reaching a goal (RLVR) where reward is a deterministic binary terminal verifier.

\begin{assumption}\label{ass:rlvr}
There exists a termination action $a_{\mathrm{term}}\in\mcA$ (e.g., $\eos$), an absorbing state $s_{\mathrm{abs}}\in\mcS$, and a goal set $G\subseteq\mcS$ such that for all $s\in\mcS$:
\begin{enumerate}
    \item $r(s,a)=0$ for all $a\neq a_{\mathrm{term}}$;
    \item taking $a_{\mathrm{term}}$ transitions to the absorbing state, i.e. $P(s_{\mathrm{abs}}\mid s,a_{\mathrm{term}})=1$;
    \item the terminal reward is deterministic and binary,
$r(s,a_{\mathrm{term}})=\mathbb{I}\{s\in G\}\in\{0,1\}$.
Moreover, the absorbing state yields no further reward and transitions to itself: for all $a\in\mcA$,
\[
r(s_{\mathrm{abs}},a)=0,\qquad P(s_{\mathrm{abs}}\mid s_{\mathrm{abs}},a)=1.
\]
\end{enumerate}
\end{assumption}

Let $\tau\le H$ be the (first) absorption time. Define the success probability and (conditional) successful path length
\[
  p(\pi)=\mathbb{P}_{\pi,\mu}(\text{success within }H),
  \qquad
  L(\pi)=\EE_{\pi,\mu}[\tau\mid\text{success}],
\]
with the convention that the product $p(\pi)\,(L(\pi)-1)$ is interpreted as $0$ when $p(\pi)=0$.
Call $\pi$ a \emph{shortest path policy} if it maximizes $p(\pi)$ and, among all maximizers of $p$, minimizes $L(\pi)$. If $p_{\star}\coloneqq\max_\pi p(\pi)=0$, the shortest path condition reduces to the first criterion.

\begin{lemma}[Uniform Taylor expansion]\label{lem:main-uniform-expansion}
Let $\varepsilon=1-\gamma$. Under Assumption~\ref{ass:rlvr}, for every policy $\pi$,
\[
  J_\gamma(\pi)
  = \mathbb{E}_{\pi,\mu}\big[\gamma^{\tau-1}\mathbf{1}\{\text{success}\}\big]
  = p(\pi)\Big(1-\varepsilon\,(L(\pi)-1)\Big) + R_\pi(\varepsilon),
\]
with remainder satisfying the uniform bound $|R_\pi(\varepsilon)|\le C_H\,\varepsilon^2$, where $C_H\coloneqq\tfrac12(H-1)(H-2)$.
\end{lemma}

Lemma~\ref{lem:main-uniform-expansion} makes the SSP structure explicit: as $\gamma\uparrow 1$, the leading term is $p(\pi)$ (accuracy), and discounting breaks ties among success maximizers using the next term, which depends on $L(\pi)$ (successful path length). In other words, for $\gamma$ sufficiently close to $1$, maximizing $J_\gamma$ implements the SSP objective “maximize success probability, then minimize (conditional) steps to success” \citep{bertsekas2012dynamic,puterman2014markov}. We now show that Blackwell optimality recovers the shortest-path SSP objective in goal reaching MDPs (RLVR).

\begin{theorem}\label{mainthm:bw-shortest-path}
In finite-horizon MDPs with a deterministic binary terminal verifier reward (Assumption~\ref{ass:rlvr}), every Blackwell optimal policy is a shortest path policy:
\[
  \Pi^\star_{\mathrm{bw}}\subseteq\argmin_{\pi\in\Pi_{\max p}} L(\pi),
  \quad \text{ where } \quad
  \Pi_{\max p}=\argmax_{\pi\in\Pi} p(\pi).
\]
\end{theorem}

\cref{mainthm:bw-shortest-path} give our main theoretical guarantee. Namely that Blackwell optimal policies are accuracy maximizing and have the shortest mean response length {within} the class of accuracy maximizing ($\gamma=1$) policies.

\paragraph{Proof intuition (sketch).}
Taking the Taylor expansion (Lemma~\ref{lem:main-uniform-expansion}) of $J_\gamma(\pi^\star) - J_\gamma(\pi)$ gives
$$
p(\pi^\star) - p(\pi) - (1-\gamma)\!\left(p(\pi^\star)(L(\pi^\star)-1) - p(\pi)(L(\pi)-1)\right) + O\!\left((1-\gamma)^2\right)\,.
$$
Since $\pi^\star$ is Blackwell optimal it must be optimal for all $\gamma$ arbitrarily close to one. Thus if some $\pi$ had $p(\pi)>p(\pi^\star)$, the leading term $p(\pi^\star)-p(\pi)<0$ would make $J_\gamma(\pi^\star)-J_\gamma(\pi)<0$ for $\gamma$ close enough to $1$, contradicting Blackwell optimality (\cref{maindef:blackwell}). Therefore $\pi^\star\in\arg\max_{\pi\in\Pi}p(\pi)$. Moreover, among policies with $p(\pi)=p(\pi^\star)$, the first term cancels and optimality for $\gamma\to 1$ forces $L(\pi^\star)\le L(\pi)$.

\paragraph{Existence of Blackwell optimality for restricted classes.}
The SSP characterization above is useful only if Blackwell optimal policies exist in the restricted class $\Pi$. We now adapt classical Blackwell arguments \citep{blackwell,ZWICK1996343,puterman2014markov,grand2023reducing} to the case where the admissible class is restricted. 

\begin{assumption}[Finite policy class]\label{mainass:finite-Pi}
The admissible class $\Pi$ is finite: $|\Pi|<\infty$.
\end{assumption}
\begin{theorem}\label{mainthm:bw-finite-Pi}
Given a finite horizon MDP $M$,
under Assumption~\ref{mainass:finite-Pi}, there exists $\gamma'\in[0,1)$ and a nonempty set $\Pi^\star_{\mathrm{bw}}\subseteq\Pi$ such that for all $\gamma\in(\gamma',1)$,
\[
\argmax_{\pi\in\Pi} J_\gamma(\pi)=\Pi^\star_{\mathrm{bw}}.
\]
\end{theorem}

Combining \cref{mainthm:bw-finite-Pi} with \cref{mainthm:bw-shortest-path} yields that, in goal reaching MDPs (RLVR) with a finite restricted policy class,{there exists} a policy that is discounted optimal for all $\gamma$ sufficiently close to $1$, and this policy is a shortest path policy in the sense above. We now introduce the Blackwell discount factor, first introduced by \citet{grand2023reducing}.  

\begin{definition}\label{maindef:gamma-bw}
The Blackwell discount factor is
\[
\gamma_{\mathrm{bw}}\coloneqq \inf\Big\{\gamma\in[0,1):\ \Pi_{\gamma'}^\star = \Pi_{\mathrm{bw}}^\star\ \ \forall\,\gamma'\in(\gamma,1)\Big\},
\]
where $\Pi_\gamma^\star = \arg\max_{\pi\in\Pi} J_\gamma(\pi)$.
\end{definition}

At a high level, the Blackwell discount factor $\gammabw$ guarantees that any policy that is discount optimal for $\gamma \in [\gammabw,1)$ is also Blackwell optimal. This reduces finding a Blackwell optimal policy to solving for a discount optimal policy.

\paragraph{Instance hardness and a uniform critical discount.}
The quantity $\gamma_{\mathrm{bw}}$ is {instance dependent} since it depends on the MDP instance $M$ and on the admissible class $\Pi$. This quantity captures how “hard” it is for discounting to reliably implement the SSP tie breaking without sacrificing success probability. Concretely, the closer $\gamma_{\mathrm{bw}}$ is to $1$, the smaller the margin separating the best SSP behavior from near-ties in $\Pi$, and the more carefully one must choose $\gamma$ to avoid preferring shorter but less accurate behaviors.

Crucially, in our setting each reasoning instance induces a {finite} horizon MDP, so $\gamma_{\mathrm{bw}}(M,\Pi)<1$ for every instance. Moreover, for any {finite collection} of instances $\{M_i\}_{i=1}^N$, one may define a single critical discount factor
\[
\gamma_{\mathrm{crit}} \coloneqq \max_{i\in[N]} \gamma_{\mathrm{bw}}(M_i,\Pi) \;<\; 1,
\]
and then any $\gamma\in(\gamma_{\mathrm{crit}},1)$ is simultaneously “close enough to $1$” for {all} those instances. In this sense, finiteness of the problem family yields a {single} discount choice that preserves instance-wise SSP optimality across the entire finite set. We now state a result that shows that for an arbitrary finite restricted policy class $\Pi$, the Blackwell discount factor exists. 

\begin{lemma}\label{mainlem:existence-gammabw-Pi} Given a finite horizon MDP $M$,
under Assumption~\ref{mainass:finite-Pi}, the Blackwell factor $\gamma_{\mathrm{bw}}$ exists and satisfies $\gamma_{\mathrm{bw}}<1$.
\end{lemma}
\begin{proof}
Theorem~\ref{mainthm:bw-finite-Pi} ensures that $\Pi_\gamma^\star$ is constant for all $\gamma$ sufficiently close to $1$, so the infimum in Definition~\ref{maindef:gamma-bw} is well defined and strictly less than $1$.
\end{proof}

The next lemma establishes that for finite horizon problems, a Blackwell optimal policy must also be optimal for the undiscounted objective.
\begin{lemma}\label{contunity}
A Blackwell optimal policy is also optimal in the undiscounted problem.
\end{lemma}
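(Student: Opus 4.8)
The plan is to exploit the fact that, for a finite horizon, the discounted return is a \emph{polynomial} in $\gamma$ and therefore extends continuously to $\gamma=1$, where it coincides with the undiscounted objective. Concretely, writing $\rho_t^\pi\coloneqq\int\EE_{\pi,s}[R_t]\,\mu(ds)$ for the expected reward collected at step $t$ (a finite quantity since rewards are bounded and $H<\infty$), one has $J_\gamma(\pi)=\sum_{t=1}^H\gamma^{t-1}\rho_t^\pi$, a polynomial of degree at most $H-1$ in $\gamma$. Hence $\gamma\mapsto J_\gamma(\pi)$ is continuous on $[0,1]$ and $\lim_{\gamma\to1^-}J_\gamma(\pi)=\sum_{t=1}^H\rho_t^\pi=J_1(\pi)$, the accuracy/undiscounted return as defined in the paper.

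Next I would unpack the definition of Blackwell optimality. Let $\pi\in\Pi^\star_{\mathrm{bw}}$. By definition there is some $\gamma_0\in[0,1)$ with $\pi\in\Pi_\gamma^\star$ for all $\gamma\in[\gamma_0,1)$; equivalently, $J_\gamma(\pi)\ge J_\gamma(\pi')$ for every $\pi'\in\Pi$ and every $\gamma\in[\gamma_0,1)$. The key point is that this inequality holds on an entire left-neighbourhood of $1$, not merely at isolated discount factors.

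Finally, fix an arbitrary $\pi'\in\Pi$. Both $\gamma\mapsto J_\gamma(\pi)$ and $\gamma\mapsto J_\gamma(\pi')$ are continuous at $\gamma=1$ by the first step, so passing to the limit $\gamma\to1^-$ in $J_\gamma(\pi)\ge J_\gamma(\pi')$ (valid throughout $[\gamma_0,1)$) gives $J_1(\pi)\ge J_1(\pi')$. Since $\pi'\in\Pi$ was arbitrary, $\pi$ maximizes $J_1$ over $\Pi$, i.e.\ it is optimal in the undiscounted problem. There is no substantive obstacle here; the only things to be careful about are (i) that the continuous extension to $\gamma=1$ genuinely equals the finite-horizon undiscounted objective (it does, since that objective is literally the $\gamma=1$ evaluation of the same finite sum), and (ii) that one uses the inequality on a full interval $(\gamma_0,1)$ before taking limits, rather than attempting to argue at $\gamma=1$ directly — which is exactly what the definition of Blackwell optimality supplies.
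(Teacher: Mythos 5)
Your proposal is correct and follows essentially the same route as the paper's proof: it uses the polynomial (hence continuous) dependence of $J_\gamma$ on $\gamma$ in the finite-horizon setting, applies the Blackwell inequality $J_\gamma(\pi)\ge J_\gamma(\pi')$ on an interval $[\gamma_0,1)$, and passes to the limit $\gamma\to1^-$ to conclude $J_1(\pi)\ge J_1(\pi')$. Your write-up just spells out the coefficient decomposition $J_\gamma(\pi)=\sum_{t=1}^H\gamma^{t-1}\rho_t^\pi$ a bit more explicitly than the paper does.
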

\begin{proof}
    Suppose $\pi$ is Blackwell optimal: $\pi\in \Pi_{\mathrm{bw}}^\star$. Then for any policy $\pi'$ we have $
    J_\gamma(\pi) - J_\gamma(\pi') \geq 0
    $ for all $\gamma \in [\gamma_{\mathrm{bw}},1)$. Therefore since $J_1(\pi)$ is well defined for finite horizon MDPs, $$
    \lim_{\gamma \rightarrow 1} J_\gamma(\pi) - J_\gamma(\pi')\geq 0\,.
    $$ We also know that $J_\gamma(\pi) - J_\gamma(\pi') $ is a polynomial and therefore continuous. Thus, it must be that $J_{\gamma=1}(\pi) - J_{\gamma=1}(\pi')\geq 0$, i.e. $\pi$ is also optimal in the undiscounted problem. 
\end{proof}

\subsection{Softmax training, greedy deployment}
We now consider a common setting in language model post training and deep reinforcement learning where we use softmax policies for training and then evaluate (or deploy) the greedified policy \citep{haarnoja2018soft}. This setting is important as our experimental setup will train softmax policies and evaluate their greedified variants. We fix a deterministic tie breaking rule on $\mcA$ and define the greedification map on the states
\[
\Greed(\pi,s)\in\arg\max_{a\in\mcA}\,\pi(a\mid s)\qquad \forall s \in \mcS\,.
\]
The deployment class is the image $\Sigma\coloneqq\{\Greed(\pi,\cdot):\pi\in\Pi_s\}$, a subset of the {deterministic stationary} policies on the finite horizon MDP $M$. Each $\sigma\in\Sigma$ corresponds one-to-one to a deterministic nonstationary policy. In our appendix, we also provide a bound on the Blackwell discount factor of the policy class $\Sigma$ for completeness, see \cref{thm:mu-weighted-Blackwell-gap} for more details.

\section{Training methodology}
Guided by the theory in the previous section, we translate discounting into a practical training recipe for efficient reasoning with language models. Our design has four components:

\begin{enumerate}
    \item \textbf{Discount only the environment (correctness) reward.} We apply a discount factor $\gamma\in(0,1)$ to the environment reward but not to the learner's intrinsic formatting/shaping reward. This preserves the incentive to produce well structured outputs while encouraging shorter, more efficient chains of reasoning.
    \item \textbf{KL regularization to a changing reference policy.} We use KL regularization against a reference model that is updated over training, following standard practice in policy gradient methods \citep{peters2010relative,pmlr-v119-mei20b,NEURIPS2020_8e2c381d,vaswani2022general}. This viewpoint aligns with relative entropy policy search \citep{peters2010relative} and has also been adopted in recent language model alignment work \citep{gorbatovski2025learn}.
    \item \textbf{Discount only reasoning tokens.} Discounting is applied exclusively to tokens used for reasoning; we do not discount tokens required for prompt adherence, formatting, or final answer presentation.
    \item \textbf{Comparable token budgets across methods.} To ensure fairness, we make token budgets across methods comparable: since discounting shortens reasoning traces, we increase the number of rollouts for discounted methods so that the total tokens processed—and hence training accuracy—are comparable to the undiscounted baseline.
\end{enumerate}

\paragraph{Objective.}
Because both the correctness and formatting signals are computed only at the end of the trajectory, we use a sequence level return. Let $m_t\in\{0,1\}$ indicate whether token $t$ is part of the reasoning span and define the number of reasoning tokens
$K(\tau) \triangleq \sum_t m_t$. Let $r^{e}(\tau)$ be the environment/correctness reward and $r^{f}(\tau)$ the formatting/shaping reward, both evaluated at the end of the rollout $\tau$. We discount only the environment reward as a function of reasoning length:
\begin{equation}
R(\tau) = \gamma^{\,K(\tau)}\, r^{e}(\tau) + r^{f}(\tau).
\label{eq:seq-return}
\end{equation}
The learner then optimizes
\begin{equation}
\EE_{S_1\sim\mu,\,\tau\sim\pi(S_1)}
\left[
R(\tau)\right]-
\beta\,\KL\left(\pi \mid \pi'\right),
\label{eq:master-objective}
\end{equation}
where $\pi'$ is a reference policy that changes over training (defined below) and $\beta>0$ sets the regularization strength. \cref{eq:seq-return} applies discounting only through $K(\tau)$, leaving formatting tokens undiscounted, in accordance with the Blackwell optimality perspective.

\paragraph{Implementation details.}
(i) \emph{Reasoning mask.} The indicator $m_t$ isolates tokens that perform latent computation (chain of thought or tool use) from tokens required for formatting or final answer emission. (ii) \emph{Reference updates.} The reference $\pi'=\pi_{\text{ref}}^{(u)}$ is updated periodically (e.g., at epoch or fixed step boundaries) to stabilize learning while allowing the target policy to improve. (iii) \emph{Comparable budgets.} We report results under matched token budgets; if discounted training yields fewer reasoning tokens per generation, we increase generations to equalize total tokens seen before comparing accuracy. We now elaborate on each component.

\subsection{Extrinsic versus intrinsic reward}
Extrinsic reward comes from the environment, whereas intrinsic reward is assigned by the learner to its own experience, usually to speed up learning or exploration \citep{singh2010intrinsically,barto2012intrinsic,linke2020adapting}. The goal of maximizing correctness is extrinsic, since it comes from the environment. By contrast, formatting rewards that encourage the learner to emit correctly structured reasoning and answer tags are intrinsic: they help the agent structure its reasoning and format the answer in a way that satisfies the verifier. Only the correctness reward is necessary to learn an optimal policy, but intrinsic rewards can guide the learner toward behaviors beneficial for learning. Since we care about learning Blackwell optimal policies, we discount only the extrinsic correctness reward and leave intrinsic formatting rewards undiscounted. Popular frameworks that allow discounting, such as ByteDance's Volcano Engine Reinforcement Learning for LLMs library \citep{sheng2025hybridflow}, discount both extrinsic and intrinsic rewards.

\subsection{KL regularization}
Discounting strongly nudges the model to shorten its answers. If the policy moves too fast, it can \emph{collapse}: it learns to stop early and forgets how to reason. We add a KL penalty to a \emph{moving} reference policy to keep updates small—like a trust region—so the objective changes gradually. The reference policy is not fixed: we periodically refresh it to the current policy so the anchor follows progress without allowing a single large drift. More specifically, every $u$ training steps we perform
\[
\pi_{\text{ref}} \leftarrow \texttt{stop\_grad}(\pi)\,,
\]
the details of which can be found in \citet{gorbatovski2025learn} or in the TRL library \citep{vonwerra2022trl}.

\subsection{What to discount}
Discounting is applied only to reasoning (thinking) tokens:
\[
K(\tau)=\sum_{t=1}^{|\tau|} m_t,
\qquad
m_t=\mathbb{I}\{\text{token $t$ lies in the reasoning span}\}.
\]
In our experiments, we delineate the reasoning spans using explicit tags injected by prompting (e.g., \texttt{<reasoning>} $\cdots$ \texttt{</reasoning>}). Tokens required for prompt adherence, formatting and the final answer segment have $m_t=0$ and thus are not discounted. Empirically, discounting the entire response slightly hurt accuracy (about a $0.5\%$–$1.0\%$ drop on GSM8K): the model would occasionally drop formatting tags required by the verifier or respond with an answer that was too short (e.g., dropping zeros from long integers).

\subsection{Comparable tokens}
Discounted policies produce shorter traces, so for the same number of epochs (or passes over prompts) they experience fewer transitions/samples than undiscounted policies. This can make discounted methods look worse simply because they saw less data, not because the objective is inferior. To keep comparisons fair, whenever this discrepancy mattered during training we adjusted the number of generations: either increasing generations for the discounted method or, when more sensible, decreasing generations for the undiscounted method so that the total samples/tokens observed were comparable.

In some settings, the discounted method still matched the undiscounted baseline despite seeing fewer samples—an informative robustness result. In others, we ensured sample counts were comparable to make a fair judgment.

\paragraph{Practical notes.}
(i) \emph{Choosing $\gamma$.} In light of the Blackwell analysis, we select $\gamma$ as far from $1$ as possible while preserving undiscounted training accuracy\footnote{In our empirical setup, we first tune the hyperparameters to maximize the performance of the undiscounted method, then apply discounting with these hyperparameters.}. This can be accomplished via a simple bisection search, adjusting $\gamma$ until accuracy matches (or begins to dip below) the undiscounted training accuracy. (ii) \emph{Updating the reference policy.} We choose the update frequency via ablations—namely, we find the best update frequency and $\beta$ that maximize the undiscounted model's accuracy and apply the same values to the discounted methods. (iii) \emph{No algorithmic change required.} Any policy optimization algorithm—e.g., REINFORCE \citep{williams1992simple} and variants (such as REINFORCE Leave One Out \citep{ahmadian2024back})—can be used with \cref{eq:master-objective}; our contribution is training with the discounted return in \cref{eq:seq-return} together with the masking and budgeting rules above. In what follows, we employ GRPO as our policy optimization method.

\section{Numerical experiments}


We empirically validate our theoretical prediction that discounting incentivizes efficient reasoning in large language models. Recall from \cref{mainthm:bw-shortest-path} that in deterministic verifier MDPs, a Blackwell optimal policy prioritizes correctness and, among equally correct strategies, minimizes expected trajectory length. Our experiments test whether this pattern appears in practice when post training language models using GRPO.

\paragraph{Setup.}
We finetune and evaluate four instruction tuned models: Qwen2.5 7B-Instruct and Qwen2.5 14B-Instruct \citep{yang2025qwen3}, Llama~3 8B-Instruct \citep{grattafiori2024llama} and Phi-4 \citep{abdin2024phi}, post trained via GRPO with and without discounting. The undiscounted case ($\gamma=1$) optimizes correctness only, whereas $\gamma<1$ additionally rewards shorter successful trajectories. We evaluate on grade school math (GSM8K) \citep{cobbe2021training} and MATH \citep{hendrycks2021measuring} for Qwen2.5 7B and Llama~3 8B. We then train the larger Qwen2.5 14B and Phi-4 models on a subset of the DeepScaleR math dataset \citep{deepscaler2025} and evaluate on AMC~2023, AIME~2025, MINERVA \citep{lewkowycz2022solving} and OLYMPIAD \citep{he2024olympiadbench} to test generality. We report Pass@1 and mean response length.
{Pass@1} is the fraction of problems for which the {first} generated solution (one sample per prompt) is judged correct by the verifier. In our setting, the average Pass@1 is the {accuracy}.

\paragraph{Implementation and benchmarking.}

\begin{table}
  \centering
  \small
  \setlength{\tabcolsep}{6pt}
  \renewcommand{\arraystretch}{1.12}
  \begin{tabular}{llcccc}
    \toprule
    \textbf{Dataset} & \textbf{Model} &
    \textbf{Undisc.\ Pass@1} & \textbf{Undisc.\ Len} &
    \textbf{Disc.\ Pass@1}   & \textbf{Disc.\ Len} \\
    \midrule
    GSM8K & Qwen2.5 7B-Instruct  & $91.06$ & 217.60 & $91.07$ & 170.08 \\
          & Llama 3 8B-Instruct  & $80.87$ & 125.43 & $81.07$ & 108.67 \\
    \addlinespace[2pt]
    MATH  & Qwen2.5 7B-Instruct  & $64.80$ & 491.32 & $64.55$ & 384.96 \\
          & Llama 3 8B-Instruct  & $24.48$ & 328.43 & $24.75$ & 257.73 \\
    \bottomrule
  \end{tabular}
  \caption{GSM8K and MATH: Pass@1 and mean response length (tokens) for discounted vs.\ undiscounted GRPO. Averaged over 3 training seeds and 10 evaluation seeds per model; evaluation seeds are fixed across methods for paired comparisons.}
  \label{tab:main_compact}
\end{table}

We use Hugging Face TRL for GRPO fine tuning and vLLM \citep{kwon2023efficient} for inference. At inference, we use greedy decoding (temperature $\nu=0$), consistent with \cref{thm:mu-weighted-Blackwell-gap}. We select Qwen2.5 7B-Instruct and Llama~3 8B-Instruct as established baselines for sanity checking our implementation and verify that our reimplementations meet or exceed published numbers on GSM8K and MATH. For Qwen2.5 7B-Instruct we compare against VERL’s official baselines; for Llama~3 8B-Instruct we follow \citet{roux2025tapered}. Minor differences may arise because we average over multiple training and evaluation seeds, whereas some prior reports use single seed estimates. For GSM8K we limit completion length to $786$ tokens; for MATH to $2048$ tokens; and for DeepScaleR to $4096$ tokens.

\paragraph{Variance control and reporting.}

\begin{wrapfigure}{l}{0.45\textwidth} 
  \centering
  \includegraphics[width=\linewidth]{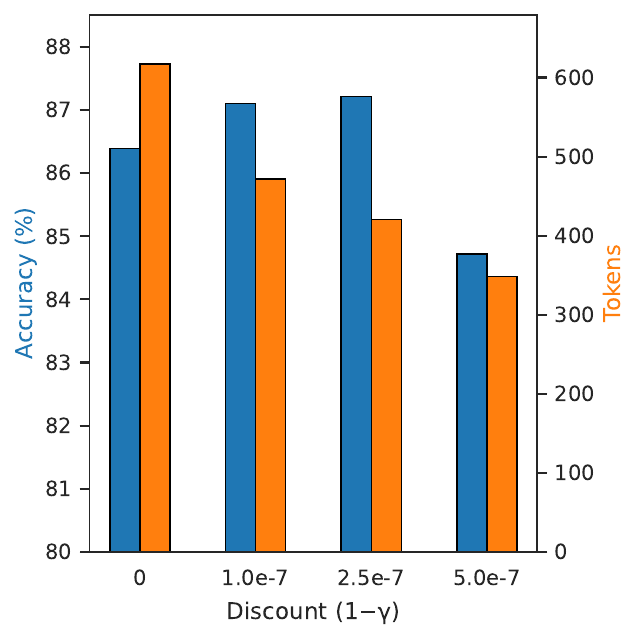}
  \caption{GSM8K accuracy (blue, left) and tokens (orange, right) vs.\ discount \((1-\gamma)\).}
  \label{fig:gsm8k_discount}
\end{wrapfigure}

To obtain stable estimates, we repeat each training run with $3$ random training seeds and, for each trained model, evaluate with $10$ independent sampling seeds on GSM8K and MATH; we report averages over $3\times 10$ runs per condition and fix evaluation seeds across methods for paired comparisons. For AMC~2023, AIME~2025, MINERVA and OLYMPIAD, we average over five evaluation seeds per model. This matters because RL style post training and decoding introduce variance \citep{patterson2024empirical,he2025nondeterminism} and single seed reporting can be misleading for both Pass@1 and length statistics.
When sweeping $\gamma$, we {select and report a single discounted configuration per model/dataset} using the following criterion: among all discounted settings whose {training} Pass@1 {matches or exceeds} that of the undiscounted run, we choose the one with the {shortest mean response length}. All tabled metrics are then computed on the evaluation seeds for the selected configuration.

\noindent\paragraph{Main results.}
\cref{tab:main_compact,tab:cross_by_model} show that, on average over seeds, discounted models match the accuracy of undiscounted ones while producing shorter responses. For example, on GSM8K, discounting reduces mean response length by $22\%$ for Qwen2.5 7B-Instruct and by $13\%$ for Llama~3 8B-Instruct with an insignificant change in Pass@1. This aligns with \cref{mainthm:bw-shortest-path}, which predicts shortest path behavior at fixed success probability. The trend holds for the larger models evaluated on datasets distinct from their training set. Specifically, the DeepScaleR math dataset does not contain problems from OLYMPIAD, MINERVA, or AIME~2025; however, it does include problems from AMC prior to 2023. Across architectures and datasets, we consistently observe that discounting enforces length minimization subject to maintaining accuracy. 

\begin{table}
  \centering
  \small
  \setlength{\tabcolsep}{5pt}
  \renewcommand{\arraystretch}{1.12}
  \begin{tabular}{llcccc}
    \toprule
    \textbf{Model} & \textbf{Dataset} &
    \textbf{Undisc.\ Pass@1} & \textbf{Undisc.\ Len} &
    \textbf{Disc.\ Pass@1}   & \textbf{Disc.\ Len} \\
    \midrule
    Phi-4                    & AMC 2023   & $51.00$ & 1134.30 & $61.00$ & 716.29 \\
                             & AIME 2025  & $14.00$ & 1263.87 & $19.33$ & 800.09 \\
                             & MINERVA    & $28.46$ &  553.74 & $29.85$ & 318.10 \\
                             & OLYMPIAD   & $36.91$ & 1059.92 & $35.67$ & 707.64 \\
    \addlinespace[2pt]
    Qwen2.5 14B-Instruct     & AMC 2023   & $50.00$ &  737.47 & $59.50$ & 582.31 \\
                             & AIME 2025  & $10.00$ &  891.43 & $10.67$ & 699.56 \\
                             & MINERVA    & $27.21$ &  522.14 & $27.43$ & 437.31 \\
                             & OLYMPIAD   & $35.13$ &  797.57 & $34.76$ & 684.02 \\
    \bottomrule
  \end{tabular}
  \caption{Pass@1 and mean response length (tokens) for undiscounted vs.\ discounted GRPO. Averages over 5 evaluation seeds per model.}
  \label{tab:cross_by_model}
\end{table}

\paragraph{Effect of the discount factor.}

We run additional experiments with Qwen3 1.7B \citep{yang2025qwen3} on GSM8K to examine performance as a function of $\gamma$. For these runs, we increase the completion length limit to $1536$ because outputs were frequently clipped for being too long. As shown in \cref{fig:gsm8k_discount}, varying $\gamma$ confirms the predicted tradeoff: smaller $\gamma$ reliably shortens responses but can reduce accuracy. Theory explains this: for $\gamma$ close to $1$, policies first maximize correctness; overly aggressive discounting shifts probability toward shorter trajectories even when that harms success.

\section{Conclusions and Future Work}
We studied efficient reasoning in verifier based MDPs through the lens of Blackwell optimality \citep{blackwell,grand2023reducing}. Within restricted policy classes, we showed that for $\gamma$ sufficiently close to $1$ there exists a Blackwell optimal policy that maximizes undiscounted success and, among accuracy maximizers, minimizes expected trajectory length. For softmax training with greedy deployment, the induced deterministic deployment class is finite and admits a bounded Blackwell discount factor; we provide an explicit upper bound on how close to $1$ the discount must be. Guided by this theory, we proposed a practical recipe: discount only the environment reward as a function of reasoning tokens, keep intrinsic formatting rewards undiscounted, add KL regularization to a moving reference policy \citep{peters2010relative} and ensure comparable token budgets. Empirically, discounted GRPO matches Pass@1 accuracy while substantially shortening responses across math benchmarks. Our theoretical results extend to methods that introduce small per token penalties in finite horizon MDPs with binary rewards (verifiers) \citep{bertsekas2012dynamic}, suggesting that several length penalty methods \citep{arora2025training,su2025thinking,xiang2025just} recover the same accuracy then length ordering in the near undiscounted regime when properly implemented. This further sheds light on adapting to the inherent token complexity of a given question \citep{lee2025well}: choosing $\gamma$ within the Blackwell region steers the learner toward the shortest successful trajectories allowed by the class without sacrificing accuracy. Some of our empirical results suggest that discounted methods can achieve higher accuracy with shorter reasoning traces. An interesting avenue for future work is to investigate whether shorter, more compressed reasoning improves generalization on reasoning tasks. As argued in \citet{hutter2007universal}, compression (or prediction) is linked to improved generalization; whether this extends to compressed reasoning traces remains open. Another direction is to study whether methods that promote longer reasoning \citep{liu2025prorl} can be combined with methods that shorten reasoning: longer reasoning promotes path finding, while shorter reasoning promotes path compression. A pipeline that first uses longer traces to discover strategies and then compresses them (akin to distillation \citep{hinton2015distillingknowledgeneuralnetwork}) may yield stronger reasoning policies.

\bibliography{ref}
\bibliographystyle{iclr2026_conference}

\appendix

 \section{Omitted Proofs}

\begin{proposition}
Fix $p\in(0,1)$ and $0<q_1<q_2<1$, and consider the MDP in Figure~\ref{fig:toy-blackwell} with horizon $H\ge 4$ and deterministic initial state $s_0$. Let the restricted policy class be $\Pi=\{\pi_1,\pi_2,\pi_3\}$, where at $s_0$: for $i\in\{1,2\}$, $\pi_i$ selects $a_3$ with probability $q_i$ and $a_2$ with probability $1-q_i$, and $\pi_3$ selects $a_1$ with probability $p$ and $a_4$ with probability $1-p$. Let $\tau(\pi)$ denote the time step at which $a_{\mathrm{end}}$ is taken under policy $\pi$. Then
\[
J_1(\pi_1)=J_1(\pi_2)=1,\quad \EE[\tau(\pi_i)]=3+q_i\ (i=1,2),\quad
J_1(\pi_3)=p\,,\quad \tau(\pi_3)=2\,.
\]
For all $\gamma\in[0,1)$,
\[
J_\gamma(\pi_i)=(1-q_i)\gamma^{2}+q_i\gamma^{3}\quad (i=1,2),\qquad
J_\gamma(\pi_3)=p\,\gamma\,.
\]
\end{proposition}
Thus there exists a threshold $\gamma'\in(p,1)$ such that for every $\gamma>\gamma'$, $\pi_1$ is both an optimal policy in $\Pi$ and a shortest path policy.
\begin{proof}
Under $\pi_i$ for $i\in\{1,2\}$, the process terminates successfully with reward $1$ at time $\tau=3$ if $a_2$ is chosen (probability $1-q_i$) and at time $\tau=4$ if $a_3$ is chosen (probability $q_i$). Hence $J_1(\pi_i)=1$, $\EE[\tau(\pi_i)]=3(1-q_i)+4q_i=3+q_i$, and
\[
J_\gamma(\pi_i)=(1-q_i)\gamma^{2}+q_i\gamma^{3}\,.
\]
Under $\pi_3$, $a_{\mathrm{end}}$ is taken at $\tau(\pi_3)=2$ and yields reward $1$ iff $a_1$ was chosen at $t=1$, which occurs with probability $p$, hence $J_\gamma(\pi_3)=p\,\gamma$ and $J_1(\pi_3)=p$.

For $\gamma<1$,
\[
J_\gamma(\pi_1)-J_\gamma(\pi_2)
= \big[(1-q_1)\gamma^{2}+q_1\gamma^{3}\big] - \big[(1-q_2)\gamma^{2}+q_2\gamma^{3}\big]
= (q_2-q_1)\,\gamma^2\,(1-\gamma)>0\,,
\]
so discounting always prefers $\pi_1$ over $\pi_2$.
Now consider
\[
\phi(\gamma)\coloneqq J_\gamma(\pi_1)-J_\gamma(\pi_3)
= (1-q_1)\gamma^{2}+q_1\gamma^{3}-p\gamma
= \gamma\big((1-q_1)\gamma+q_1\gamma^2-p\big).
\]
Let $f(\gamma)\coloneqq(1-q_1)\gamma+q_1\gamma^2-p$. Then
\[
f'(\gamma)=(1-q_1)+2q_1\gamma \ge 1-q_1>0,
\]
so $f$ is strictly increasing on $[0,1]$. Moreover,
\[
f(p)=(1-q_1)p+q_1p^2-p=-q_1p(1-p)<0,\qquad f(1)=1-p>0,
\]
so there exists a unique $\gamma_{\mathrm{th}}\in(p,1)$ with $f(\gamma_{\mathrm{th}})=0$, i.e. $\phi(\gamma_{\mathrm{th}})=0$. For $\gamma>\gamma_{\mathrm{th}}$ we have $f(\gamma)>0$ and hence $J_\gamma(\pi_1)>J_\gamma(\pi_3)$, while $J_\gamma(\pi_1)>J_\gamma(\pi_2)$ holds for all $\gamma<1$. Thus for every $\gamma>\gamma_{\mathrm{th}}$, $\pi_1$ is $\gamma$-optimal in $\Pi$; since $J_1(\pi_1)=J_1(\pi_2)=1>p=J_1(\pi_3)$ and $\EE[\tau(\pi_1)]=3+q_1<3+q_2=\EE[\tau(\pi_2)]$, $\pi_1$ is also a shortest path policy among accuracy maximizers.
\end{proof}
 
We adapt classical Blackwell arguments \citep{ZWICK1996343,puterman2014markov,grand2023reducing} to the case where the admissible class is restricted. Throughout this section we assume finite horizon $H<\infty$, finite state and action sets and bounded rewards. For a policy $\pi$ and $\gamma\in[0,1)$, define the (discounted) value
\[
v_\gamma^\pi(s)\coloneqq \EE_{\pi,s}\Big[\sum_{t=1}^H \gamma^{t-1} R_t\Big],
\qquad
J_\gamma(\pi)\coloneqq \int v_\gamma^\pi(s)\,\mu(ds).
\]
We first handle a finite admissible class and then specialize to greedy deployment policies induced by a softmax training class.

\begin{assumption}[Finite policy class]\label{ass:finite-Pi}
The admissible class $\Pi$ is finite: $|\Pi|<\infty$.
\end{assumption}

\begin{definition}
Given $\gamma \in [0,1)$, a policy $\pi \in \Pi$ is $\gamma$ discount optimal if $J_\gamma(\pi) \geq J_\gamma(\pi')$ for all $\pi' \in \Pi$. We call $\Pi_\gamma^\star \subset \Pi$ the set of $\gamma$ discount optimal policies. 
\end{definition}

\begin{definition}
A policy $\pi$ is Blackwell optimal if there exists a $\gamma \in [0,1)$ such that $\pi \in \Pi_{\gamma'}^\star$ for all $\gamma' \in [\gamma,1)$. We call $\Pi^\star_{\mathrm{bw}}$ the set of Blackwell optimal policies. 
\end{definition}

\begin{lemma}\label{lem:poly}
For any $\pi,\pi'\in\Pi$, the difference $\Delta_{\pi,\pi'}(\gamma)\coloneqq J_\gamma(\pi)-J_\gamma(\pi')$ is a polynomial in $\gamma$ of degree at most $H-1$. Consequently it has finitely many roots in $[0,1)$ unless it is identically zero.
\end{lemma}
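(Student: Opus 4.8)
The plan is to expand the discounted value into its per-step contributions and observe that each coefficient is a $\gamma$-independent scalar, so that $v_\gamma^\pi$, and hence $J_\gamma$, is manifestly a polynomial in $\gamma$. First I would fix $\pi\in\Pi$ and $s\in\mcS$ and use linearity of expectation to write
\[
v_\gamma^\pi(s)=\sum_{t=1}^H \gamma^{t-1}\,c_t^\pi(s),\qquad c_t^\pi(s)\coloneqq \EE_{\pi,s}[R_t].
\]
The crucial observation is that $c_t^\pi(s)$ does not depend on $\gamma$: the trajectory law $\PP_{\pi,s}$ is determined by $\pi$ and the deterministic kernel $P$, neither of which involves $\gamma$; moreover $c_t^\pi(s)$ is finite because $r$ is bounded and $\mcS,\mcA$ are finite. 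Hence $\gamma\mapsto v_\gamma^\pi(s)$ is a polynomial of degree at most $H-1$, the highest power being $\gamma^{H-1}$.

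Next, since $\mu\in\Delta(\mcS)$ and $\mcS$ is finite, the integral $J_\gamma(\pi)=\int v_\gamma^\pi(s)\,\mu(ds)$ is a finite (convex) combination of the polynomials $v_\gamma^\pi(\cdot)$, hence itself a polynomial in $\gamma$ of degree at most $H-1$ with coefficients $\sum_{s}\mu(s)\,c_t^\pi(s)$. Subtracting the analogous expansion for $\pi'$ gives that $\Delta_{\pi,\pi'}(\gamma)=J_\gamma(\pi)-J_\gamma(\pi')$ is a polynomial in $\gamma$ of degree at most $H-1$.

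Finally, a polynomial of degree at most $H-1$ that is not identically zero has at most $H-1$ complex roots by the fundamental theorem of algebra, and in particular only finitely many roots in $[0,1)$; this yields the stated dichotomy. I do not anticipate a real obstacle here: the argument is essentially the standard fact that the discounted return of a fixed policy is a rational function of $\gamma$, which degenerates to a polynomial in the finite-horizon case. The only points requiring (routine) care are justifying finiteness of the $c_t^\pi(s)$ from boundedness of rewards and finiteness of $\mcS,\mcA$, and the trivial interchange of the finite sum over $t\in\{1,\dots,H\}$ with the finite sum over $s\in\mcS$ implicit in $\int\!\cdot\,\mu(ds)$, both immediate under the standing assumptions recalled at the top of the appendix.
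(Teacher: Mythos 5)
Your proposal is correct and follows essentially the same route as the paper's proof: expand the return by linearity of expectation into $\gamma$-independent coefficients $c_t$ (the paper does this directly for $J_\gamma$ under $\EE_{\pi,\mu}$, you do it per state and then average over $\mu$, which is the same thing), and then invoke the fundamental theorem of algebra for the finiteness of roots. No gaps; your extra remarks on finiteness of the coefficients and interchanging finite sums are routine and consistent with the paper's standing assumptions.
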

\begin{proof}
Linearity of expectation yields $J_\gamma(\pi)=\sum_{t=1}^{H} \gamma^{t-1}\,c_t(\pi)$ with $c_t(\pi)\coloneqq \EE_{\pi,\mu}[R_t]$, which is independent of $\gamma$. Subtracting $J_\gamma(\pi')$ and applying the fundamental theorem of algebra to
\(
\sum_{t=1}^{H}\gamma^{t-1}\big(c_t(\pi)-c_t(\pi')\big)
\)
yields the result.
\end{proof}

\begin{theorem}\label{thm:bw-finite-Pi}
Under Assumption~\ref{ass:finite-Pi}, there exists $\gamma'\in[0,1)$ and a nonempty set $\Pi^\star_{\mathrm{bw}}\subseteq\Pi$ such that for all $\gamma\in(\gamma',1)$,
\[
\argmax_{\pi\in\Pi} J_\gamma(\pi)=\Pi^\star_{\mathrm{bw}}.
\]
\end{theorem}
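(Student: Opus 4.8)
The plan is to run the classical Blackwell argument on the finite family of pairwise difference polynomials $\{\Delta_{\pi,\pi'}\}_{\pi,\pi'\in\Pi}$. By Lemma~\ref{lem:poly}, each $\Delta_{\pi,\pi'}$ is a polynomial in $\gamma$ of degree at most $H-1$, so it is either identically zero or has finitely many roots in $[0,1)$. Since $\Pi$ is finite (Assumption~\ref{ass:finite-Pi}), there are finitely many such polynomials, hence finitely many roots in total; let $\gamma'$ be the largest root strictly less than $1$ among all the non-identically-zero differences (take $\gamma'=0$ if there are none). Then on the open interval $(\gamma',1)$ every $\Delta_{\pi,\pi'}$ has constant sign (or is identically zero), so the relation ``$J_\gamma(\pi)\ge J_\gamma(\pi')$'' holds for a fixed set of ordered pairs $(\pi,\pi')$ throughout $(\gamma',1)$.

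Next I would define $\Pi^\star_{\mathrm{bw}}$ to be the set of $\pi\in\Pi$ that are maximal in this fixed order, i.e. those $\pi$ with $\Delta_{\pi,\pi'}(\gamma)\ge 0$ for all $\pi'\in\Pi$ and all $\gamma\in(\gamma',1)$. Because $\Pi$ is finite and nonempty and the ``$\ge$'' comparisons are consistent across all of $(\gamma',1)$, this maximal set is nonempty: pick any $\gamma_0\in(\gamma',1)$, take a maximizer of $J_{\gamma_0}$ over $\Pi$, and observe that its defining inequalities, having constant sign on $(\gamma',1)$, persist for every $\gamma$ in the interval. Conversely, any $\pi$ failing one of these inequalities at some (hence every) $\gamma\in(\gamma',1)$ is strictly dominated there, so it is excluded. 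This shows $\argmax_{\pi\in\Pi}J_\gamma(\pi)=\Pi^\star_{\mathrm{bw}}$ for every $\gamma\in(\gamma',1)$, which is exactly the claim; moreover each such $\pi$ is then $\gamma'$-discount optimal for all $\gamma'\in(\gamma',1)$, matching the Blackwell optimality definition.

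The one subtlety to handle carefully is ties: if $\Delta_{\pi,\pi'}\equiv 0$, then $\pi$ and $\pi'$ are interchangeable at every $\gamma$, and we must make sure such pairs are either both in $\Pi^\star_{\mathrm{bw}}$ or both out; this is automatic from the definition via the fixed preorder, but it is worth stating explicitly so that $\Pi^\star_{\mathrm{bw}}$ is well-defined as a set rather than depending on a choice of $\gamma_0$. I expect the main (mild) obstacle to be purely bookkeeping: making the ``constant sign on $(\gamma',1)$'' claim precise when some differences vanish identically and others merely have their last root at $\gamma'$, and confirming that $\gamma'<1$ because each non-trivial polynomial has only finitely many roots and there are finitely many of them. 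No continuity-at-$\gamma=1$ argument is needed here (that is Lemma~\ref{contunity}); the statement is entirely about the open interval $(\gamma',1)$, so the proof stays at the level of sign stability of finitely many polynomials.
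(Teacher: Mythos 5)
Your proposal is correct and follows essentially the same argument as the paper: invoke Lemma~\ref{lem:poly} to get finitely many roots across the finitely many pairwise difference polynomials, set $\gamma'$ to the largest such root (or $0$), use sign constancy of each $\Delta_{\pi,\pi'}$ on $(\gamma',1)$ to conclude that $\Pi_\gamma^\star$ is a fixed nonempty set there. Your extra remarks on handling identically-zero differences and ties are sound but add nothing beyond the paper's reasoning.
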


\begin{proof}
By Lemma~\ref{lem:poly}, each pairwise difference $\Delta_{\pi,\pi'}(\gamma)$ is a polynomial of degree at most $H-1$. For each ordered pair $(\pi,\pi')$ with $\Delta_{\pi,\pi'}\not\equiv 0$, let
$
Z_{\pi,\pi'}=\{\gamma\in[0,1): \Delta_{\pi,\pi'}(\gamma)=0\},
$
which is finite. Define $\Gamma=\bigcup_{(\pi,\pi'):\,\Delta_{\pi,\pi'}\not\equiv 0} Z_{\pi,\pi'}$, which is finite. Set $\gamma'=\max\Gamma$ (or $0$ if $\Gamma=\emptyset$). For any $\gamma\in(\gamma',1)$ and any pair $\pi,\pi'$, either $\Delta_{\pi,\pi'}\equiv 0$ or it has no zeros in $(\gamma',1)$, hence it has constant sign on that interval. Therefore all pairwise comparisons between $J_\gamma(\pi)$ and $J_\gamma(\pi')$ are fixed on $(\gamma',1)$. It follows that $\Pi_\gamma^\star$ is constant on $(\gamma',1)$; denote this common set by $\Pi^\star_{\mathrm{bw}}$. Nonemptiness follows from finiteness of $\Pi$.
\end{proof}

\begin{definition}\label{def:gamma-bw}
The Blackwell discount factor is
\[
\gamma_{\mathrm{bw}}\coloneqq \inf\Big\{\gamma\in[0,1):\ \Pi_{\gamma'}^\star = \Pi_{\mathrm{bw}}^\star\ \ \forall\,\gamma'\in(\gamma,1)\Big\},
\]
where $\Pi_\gamma^\star = \arg\max_{\pi\in\Pi} J_\gamma(\pi)$.
\end{definition}

\begin{lemma}\label{lem:existence-gammabw-Pi}
Under Assumption~\ref{ass:finite-Pi}, the Blackwell factor $\gamma_{\mathrm{bw}}$ exists and satisfies $\gamma_{\mathrm{bw}}<1$.
\end{lemma}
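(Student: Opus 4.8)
The plan is to read the claim off directly from Theorem~\ref{thm:bw-finite-Pi}, which already contains all of the analytic content. First I would name the set appearing in Definition~\ref{def:gamma-bw}: write
\[
\mathcal{G}\;\coloneqq\;\Big\{\gamma\in[0,1):\ \Pi_{\gamma'}^\star = \Pi_{\mathrm{bw}}^\star\ \ \forall\,\gamma'\in(\gamma,1)\Big\},
\]
so that by definition $\gamma_{\mathrm{bw}}=\inf\mathcal{G}$. To show that $\gamma_{\mathrm{bw}}$ exists it then suffices to show that $\mathcal{G}$ is a nonempty subset of $[0,1)$: being nonempty and bounded below (by $0$), it has an infimum by completeness of $\mathbb{R}$.

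For nonemptiness I would invoke Theorem~\ref{thm:bw-finite-Pi}: under Assumption~\ref{ass:finite-Pi} it produces a threshold $\gamma'\in[0,1)$ with $\argmax_{\pi\in\Pi}J_\gamma(\pi)=\Pi_{\mathrm{bw}}^\star$ for every $\gamma\in(\gamma',1)$, i.e. $\Pi_\gamma^\star=\Pi_{\mathrm{bw}}^\star$ on $(\gamma',1)$. This is exactly the defining condition for membership in $\mathcal{G}$, so $\gamma'\in\mathcal{G}$ and hence $\mathcal{G}\neq\emptyset$. Therefore $\gamma_{\mathrm{bw}}=\inf\mathcal{G}$ is well defined.

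Finally, for the strict bound $\gamma_{\mathrm{bw}}<1$, I would note that the same witness satisfies $\gamma'<1$ (it lies in $[0,1)$), and since $\gamma'\in\mathcal{G}$ we get $\gamma_{\mathrm{bw}}=\inf\mathcal{G}\le\gamma'<1$.

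I do not expect a genuine obstacle here: the argument is bookkeeping once Theorem~\ref{thm:bw-finite-Pi} is available, and the real work — controlling the finitely many sign changes of the pairwise polynomials $\Delta_{\pi,\pi'}(\gamma)$ near $\gamma=1$ — was already carried out in Lemma~\ref{lem:poly} and that theorem. The only point that needs care is recognizing that the theorem's conclusion (``$\Pi_\gamma^\star$ is eventually constant and equals $\Pi_{\mathrm{bw}}^\star$'') is precisely what makes $\mathcal{G}$ nonempty, and that the witnessing threshold automatically lies strictly below $1$, which is what upgrades ``$\gamma_{\mathrm{bw}}$ exists'' to ``$\gamma_{\mathrm{bw}}$ exists and is $<1$''.
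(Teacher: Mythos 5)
Your proposal is correct and follows essentially the same route as the paper: the paper's own proof of Lemma~\ref{lem:existence-gammabw-Pi} simply invokes Theorem~\ref{thm:bw-finite-Pi} to conclude that $\Pi_\gamma^\star$ is constant near $1$, making the infimum in Definition~\ref{def:gamma-bw} well defined and strictly below $1$. Your version merely spells out the bookkeeping (nonemptiness of the set $\mathcal{G}$, boundedness below, and $\gamma_{\mathrm{bw}}\le\gamma'<1$) that the paper leaves implicit.
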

\begin{proof}
Theorem~\ref{thm:bw-finite-Pi} ensures that $\Pi_\gamma^\star$ is constant for all $\gamma$ sufficiently close to $1$, so the infimum in Definition~\ref{def:gamma-bw} is well defined and strictly less than $1$.
\end{proof}

\subsection{Softmax training, greedy deployment}
Let $\Pi_s$ be the (possibly infinite) class of softmax policies used during training. We use the standard
\emph{time-augmented, stationary, infinite-horizon} representation of the finite-horizon problem with horizon $H$.
Define the augmented state space:
\[
\tilde{\mcS}=\{(s,t):\,s\in\mcS,\ t\in\{1,\dots,H\}\}\ \cup\ \{\mathtt{absorb}\},
\]
and the stationary transition kernel $\tilde P$ and rewards $\tilde r$ by
\[
\tilde P\big((s',t{+}1)\mid (s,t),a\big)=P(s'\mid s,a)\quad (t<H),
\]
\[
\tilde P(\mathtt{absorb}\mid (s,H),a)=1,\qquad
\tilde P(\mathtt{absorb}\mid \mathtt{absorb},a)=1,
\]
\[
\tilde r\big((s,t),a\big)=r(s,a)\quad (t\le H),\qquad \tilde r(\mathtt{absorb},a)=0.
\]
The initial distribution on augmented states is $\tilde{\mu}$ with $\tilde{\mu}((s,1))=\mu(s)$ and zero elsewhere. A (possibly nonstationary) finite-horizon policy $\pi=(\pi_t)_{t=1}^H$ induces the stationary policy
\[
\tilde\pi(a\mid (s,t))=\pi_t(a\mid s)\quad (t\le H),\qquad \tilde\pi(\cdot\mid \mathtt{absorb})\ \text{arbitrary}.
\]
We fix a deterministic tie breaking rule on $\mcA$ and define the greedification map on augmented states
\[
\Greed(\pi,(s,t))\in\arg\max_{a\in\mcA}\,\pi_t(a\mid s)\qquad \forall (s,t)\in\mcS\times\{1,\dots,H\}.
\]
The deployment class is the image $\Sigma\coloneqq\{\Greed(\pi,\cdot):\pi\in\Pi_s\}$, a subset of the \emph{deterministic stationary} policies on the augmented MDP $\tilde{M}=(\tilde{\mcS},\mcA,\tilde P,\tilde r,\tilde{\mu})$. Each $\sigma\in\Sigma$ corresponds one-to-one to a deterministic nonstationary policy on the original depth-$H$ decision tree.

\begin{lemma}\label{lem:sigma-finite}
The set $\Sigma$ is finite. In particular, if $N_{\mathrm{nodes}}$ is the number of reachable decision nodes up to depth $H$ in the original tree, then $|\Sigma|\le |\mcA|^{N_{\mathrm{nodes}}}$.
\end{lemma}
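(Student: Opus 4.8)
The plan is to exploit that determinism of the transition kernel collapses the set of states at which any deployed policy can ever be queried into a finite \emph{decision tree}, and then simply count deterministic labelings of that tree. First I would make the reachable node set precise: let $\mathcal R\subseteq\mcS\times\{1,\dots,H\}$ consist of the pairs $(s,t)$ such that $s$ is a non-absorbing state that can be obtained from some state in the support of $\mu$ by exactly $t-1$ applications of the deterministic map $P$ along some action sequence. Since $P$ is deterministic and $\mcS,\mcA,H$ are finite, $\mathcal R$ is finite; write $N_{\mathrm{nodes}}=|\mathcal R|$ (crudely $N_{\mathrm{nodes}}\le|\mcS|\,H$). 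These are exactly the nodes of the original depth-$H$ decision tree referenced just before the lemma.

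Next I would observe that a deterministic nonstationary policy on this tree is nothing but a function $f:\mathcal R\to\mcA$: its choices at nodes outside $\mathcal R$ are never exercised starting from $\tilde\mu$, so they affect neither the induced trajectory distribution nor $J_\gamma$, $p$, or $L$, and we identify policies that agree on $\mathcal R$. The number of such functions is exactly $|\mcA|^{N_{\mathrm{nodes}}}$. By the one-to-one correspondence between $\Sigma$ and deterministic nonstationary policies on the depth-$H$ tree established above, the restriction map $\sigma\mapsto(\sigma\text{ restricted to }\mathcal R)$ is a bijection from $\Sigma$ onto a subset of $\{f:\mathcal R\to\mcA\}$, whence $|\Sigma|\le|\mcA|^{N_{\mathrm{nodes}}}<\infty$. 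For the bare finiteness claim the reachability refinement can be skipped entirely: each $\Greed(\pi,\cdot)$ is one of the $|\mcA|^{|\mcS|H}$ functions $\mcS\times\{1,\dots,H\}\to\mcA$, so $\Sigma$ is a subset of a finite set.

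The only real subtlety — and the step I would be most careful about — is the bookkeeping for states unreachable from $\mu$. As raw functions on the full augmented state space, two softmax policies in $\Pi_s$ may greedify differently at an unreachable $(s,t)$, so $\Sigma$ could a priori exceed $|\mcA|^{N_{\mathrm{nodes}}}$ in size; the stated bound is to be read with policies identified by their behavior on the reachable tree (equivalently, on the support of the occupancy measure induced from $\tilde\mu$), which is precisely the content of the ``one-to-one correspondence with decision-tree policies'' and is all that matters for every quantity used elsewhere in the paper. Once that identification is fixed the count is immediate and no further estimates are required.
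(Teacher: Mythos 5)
Your argument is correct and matches the paper's own proof: both reduce $\Sigma$ to labelings of the finite reachable decision tree by actions, giving $|\Sigma|\le|\mcA|^{N_{\mathrm{nodes}}}$. Your extra care about states unreachable from $\tilde\mu$ (identifying policies by their behavior on the reachable tree, since disagreements elsewhere affect no quantity of interest) is a point the paper leaves implicit in the phrase ``each reachable node,'' but it does not change the argument.
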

\begin{proof}
Finite states and finite horizon imply a finite reachable decision tree. A greedy policy assigns exactly one action to each reachable node (equivalently, to each reachable augmented state $(s,t)$ with $t\le H$), so the number of labelings is at most $|\mcA|^{N_{\mathrm{nodes}}}$.
\end{proof}

For any policy class $\Pi$, let $\gammabw(\Pi)$ denote the Blackwell discount factor given that class in the (augmented) stationary MDP. By Theorem~\ref{thm:bw-finite-Pi} with $\Pi\leftarrow\Sigma$, we obtain:
\begin{corollary}\label{cor:bw-sigma}
There exists $\gamma_{\mathrm{bw}}(\Sigma)<1$ and a nonempty set $\Sigma^\star_{\mathrm{bw}}\subseteq\Sigma$ such that $\arg\max_{\pi\in\Sigma} J_\gamma(\pi)=\Sigma^\star_{\mathrm{bw}}$ for all $\gamma\in(\gamma_{\mathrm{bw}}(\Sigma),1)$.
\end{corollary}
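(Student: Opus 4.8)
The plan is to obtain Corollary~\ref{cor:bw-sigma} as a direct instantiation of Theorem~\ref{thm:bw-finite-Pi} and Lemma~\ref{lem:existence-gammabw-Pi} with the admissible class taken to be the deployment class $\Sigma$ inside the time-augmented stationary MDP $\tilde M$. Two preliminary checks are needed before the instantiation is legitimate: that $\Sigma$ is finite, so Assumption~\ref{ass:finite-Pi} applies with $\Pi\leftarrow\Sigma$; and that the polynomial-in-$\gamma$ structure of the return gap survives the passage to the (nominally infinite-horizon) augmented representation, so Lemma~\ref{lem:poly} applies with $\Pi\leftarrow\Sigma$.

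For the first, I would simply invoke Lemma~\ref{lem:sigma-finite}: finiteness of $\mcS$ and of the horizon $H$ makes the reachable decision tree finite, and every greedy policy is a deterministic action-labeling of its nodes, so $|\Sigma|\le|\mcA|^{N_{\mathrm{nodes}}}<\infty$. For the second, I would use the zero-reward absorbing state of $\tilde M$: for any augmented policy $\tilde\sigma$ and any start node $(s,1)$, all reward is collected at times $t\le H$, after which the chain sits in $\mathtt{absorb}$ with reward $0$; hence $v_\gamma^{\tilde\sigma}((s,1))=\EE[\sum_{t=1}^{H}\gamma^{t-1}\tilde R_t]$ coincides with the original finite-horizon value $v_\gamma^\sigma(s)$, and $J_\gamma(\sigma)=\sum_{t=1}^{H}\gamma^{t-1}c_t(\sigma)$ with $\gamma$-independent coefficients $c_t(\sigma)=\EE_{\sigma,\mu}[R_t]$. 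Consequently each pairwise difference $\Delta_{\sigma,\sigma'}(\gamma)=J_\gamma(\sigma)-J_\gamma(\sigma')$ is a polynomial in $\gamma$ of degree at most $H-1$, so Lemma~\ref{lem:poly} holds verbatim over $\Sigma$.

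Given these two facts, I would apply Theorem~\ref{thm:bw-finite-Pi} to the finite class $\Sigma$: the finitely many nonzero polynomials $\{\Delta_{\sigma,\sigma'}\}$ have finitely many roots in $[0,1)$, so above their largest root every pairwise comparison has constant sign, forcing the set of $\gamma$-optimal policies within $\Sigma$ to equal a fixed nonempty set $\Sigma^\star_{\mathrm{bw}}$ on some interval $(\gamma',1)$. Defining $\gamma_{\mathrm{bw}}(\Sigma)$ as the infimum in Definition~\ref{def:gamma-bw} with $\Pi=\Sigma$, Lemma~\ref{lem:existence-gammabw-Pi} (again with $\Pi\leftarrow\Sigma$) yields $\gamma_{\mathrm{bw}}(\Sigma)<1$ together with $\arg\max_{\pi\in\Sigma}J_\gamma(\pi)=\Sigma^\star_{\mathrm{bw}}$ for all $\gamma\in(\gamma_{\mathrm{bw}}(\Sigma),1)$, which is exactly the claim. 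The only step that requires genuine care --- and hence the main obstacle, such as it is --- is the second preliminary check: confirming that the infinite-horizon discounted return of $\tilde M$ collapses to a polynomial of degree at most $H-1$ rather than a rational function, so that the root-counting argument behind Theorem~\ref{thm:bw-finite-Pi} transfers unchanged; everything else is a routine substitution of $\Sigma$ for $\Pi$.
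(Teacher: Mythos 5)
Your proposal is correct and follows the paper's own route: the paper also obtains this corollary by invoking Lemma~\ref{lem:sigma-finite} to get $|\Sigma|<\infty$ and then applying Theorem~\ref{thm:bw-finite-Pi} with $\Pi\leftarrow\Sigma$ in the augmented MDP. Your extra check that the augmented infinite-horizon return collapses to a degree-$\le H-1$ polynomial (via the zero-reward absorbing state) is a nice explicit verification of what the paper asserts implicitly when it states that the finite-horizon and time-augmented objectives coincide.
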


\paragraph{Setup}
For a stationary deterministic policy $\pi$ on $(\tilde{\mcS},\mcA)$, let $P_\pi$ and $r_\pi$ be the induced transition matrix and reward vector on $\tilde{\mcS}$. Define the $\mu$-weighted discounted return through the augmented value equation
\[
v_\gamma^\pi = r_\pi + \gamma P_\pi v_\gamma^\pi,\qquad
J_\gamma(\pi)=\tilde{\mu}^{\top} v_\gamma^\pi,
\]
so that, for any finite-horizon policy and its image under time-augmentation, the objectives coincide:
$J_\gamma(\text{finite-horizon }\pi)=J_\gamma(\text{stationary }\tilde\pi)$ for all $\gamma\in[0,1)$. For a polynomial $p(X) = \sum_{k=0}^N a_k X^k$, write the coefficient extractor $[X^k]p=a_k$.

For $\pi,\pi'$ in the admissible class $\Pi$ (we will take $\Pi=\Sigma$), set
\[
\gamma_\mu(\pi,\pi')
~:=~
\max\Big\{\gamma\in[0,1):~\tilde{\mu}^{\top} \big(v_\gamma^\pi-v_\gamma^{\pi'}\big)=0\Big\},
\]
with the convention $\gamma_\mu(\pi,\pi')=0$ if the above set is empty or if $J_\gamma(\pi)-J_\gamma(\pi')\equiv 0$ on $[0,1)$. We aim to upper bound
\[
\bar\gamma = \max_{\pi,\pi'\in\Pi}\gamma_\mu(\pi,\pi').
\]
(If one restricts to a subclass $\Pi'\subseteq\Pi$, replace $\Pi$ by $\Pi'$ everywhere; the bound below only becomes easier.)

\begin{assumption}\label{ass:rational}
There exists $m \in \mathbb{N}$ such that for any $(s,a,s') \in \mcS \times \mcA \times \mcS$,
\[
P(s'|s,a) = \frac{n(s,a,s')}{m}
\]
with $n(s,a,s') \in \mathbb{Z}_{\ge 0},\ n(s,a,s') \leq m$ and 
\[
r(s,a) = \frac{w(s,a)}{m}
\]
with $w(s,a)\in \mathbb{Z}$ and $|w(s,a)| \leq r_\infty$.
\end{assumption}
The augmented kernel $\tilde P$ and rewards $\tilde r$ inherit this structure. Let $D_{\tilde{\mu}}=\min\{t\in\mathbb{N}_{>0}:t\,\tilde{\mu}\in\mathbb{Z}^{\tilde{\mcS}}\}$ be the least positive integer such that $t\,\tilde{\mu}$ is integer-valued.

\begin{theorem}\label{thm:mu-weighted-Blackwell-gap}
Under Assumption~\ref{ass:rational}, for any rational $\mu\in\Delta(\mcS)$ define
\[
N=2|\tilde{\mcS}|-1,\qquad
L_\mu=2\,D_{\tilde{\mu}}\,|\tilde{\mcS}|\,r_\infty\,m^{2|\tilde{\mcS}|}\,4^{|\tilde{\mcS}|},\qquad
\eta_\mu=\frac{1}{2\,N^{\,N/2+2}\,(L_\mu+1)^N}.
\]
Then, with $\bar\gamma=\max_{\pi,\pi'\in\Sigma}\gamma_\mu(\pi,\pi')$,
\[
\bar\gamma < 1-\eta_\mu.
\]
\end{theorem}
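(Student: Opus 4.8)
The plan is to bound, for each fixed pair $\pi,\pi'\in\Sigma$, the largest root in $[0,1)$ of the polynomial $\Delta_{\pi,\pi'}(\gamma)=\tilde\mu^\top(v_\gamma^\pi-v_\gamma^{\pi'})$, and then take the maximum over the finitely many pairs. The key is that $\Delta_{\pi,\pi'}$, after clearing denominators, is an \emph{integer} polynomial whose degree is at most $N=2|\tilde\mcS|-1$ and whose coefficients are bounded in absolute value by $L_\mu$. Given such a bound, a standard root-separation (Cauchy/Mahler-type) estimate says that any root which is not equal to $1$ must be bounded away from $1$ by at least $\eta_\mu=\tfrac{1}{2N^{N/2+2}(L_\mu+1)^N}$; since $\Delta_{\pi,\pi'}$ has no root at $\gamma=1$ exactly when $J_1(\pi)\neq J_1(\pi')$, and otherwise $\gamma_\mu(\pi,\pi')$ is governed by the \emph{other} roots or is set to $0$ by convention, this yields $\gamma_\mu(\pi,\pi')<1-\eta_\mu$ in all cases, and hence $\bar\gamma<1-\eta_\mu$.

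The main technical work is to produce the degree bound $N$ and the coefficient bound $L_\mu$ from the linear-algebraic representation $v_\gamma^\pi=(I-\gamma P_\pi)^{-1}r_\pi$. First I would write $(I-\gamma P_\pi)^{-1}=\operatorname{adj}(I-\gamma P_\pi)/\det(I-\gamma P_\pi)$, so that
\[
\Delta_{\pi,\pi'}(\gamma)=\frac{\tilde\mu^\top\operatorname{adj}(I-\gamma P_\pi)\,r_\pi}{\det(I-\gamma P_\pi)}-\frac{\tilde\mu^\top\operatorname{adj}(I-\gamma P_{\pi'})\,r_{\pi'}}{\det(I-\gamma P_{\pi'})}.
\]
Multiplying through by $\det(I-\gamma P_\pi)\det(I-\gamma P_{\pi'})$ gives a polynomial $q(\gamma)$ whose roots in $[0,1)$ contain those of $\Delta_{\pi,\pi'}$ (the extra roots introduced are roots of the determinants, which for a substochastic $P_\pi$ with $t<H$ transitions cannot lie in $[0,1)$ on the nonabsorbing block — I would check the absorbing structure makes $\det(I-\gamma P_\pi)$ a power of $(1-\gamma)$ times something nonvanishing on $[0,1)$, or simply absorb these into the bound). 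Each determinant and each entry of the adjugate is a polynomial in $\gamma$ of degree $\le|\tilde\mcS|$, so $q$ has degree $\le 2|\tilde\mcS|-1=N$ after accounting for the common $(1-\gamma)$ factor. For the coefficient bound: clearing the denominator $m$ from every entry of $\tilde P$ and $\tilde r$, and the denominator $D_{\tilde\mu}$ from $\tilde\mu$, turns $q$ into $m^{-c}D_{\tilde\mu}^{-1}$ times an integer polynomial; the Leibniz/cofactor expansions of size-$|\tilde\mcS|$ determinants over entries bounded by $1$ (for $P$) or by $r_\infty$ (for $r$) give a $|\tilde\mcS|!\le 4^{|\tilde\mcS|}\cdot$(poly) and $m^{2|\tilde\mcS|}$ contribution, and collecting these is exactly the definition of $L_\mu$.

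The step I expect to be the main obstacle is the root-separation inequality itself — turning ``integer polynomial of degree $\le N$ with coefficients $\le L_\mu$, not vanishing at $1$'' into ``no root within $\eta_\mu$ of $1$.'' The clean route is: if $\gamma^\star\in[0,1)$ is a root of the integer polynomial $p(X)=\sum a_kX^k$ (with $a_k\in\mathbb Z$, $|a_k|\le L_\mu+1$ after the denominator bookkeeping), consider $p$ evaluated near $1$; since $p(1)\in\mathbb Z\setminus\{0\}$ we have $|p(1)|\ge 1$, while a Taylor/mean-value estimate bounds $|p(1)-p(\gamma^\star)|=|p(1)|$ by $(1-\gamma^\star)\cdot\max_{[\gamma^\star,1]}|p'|$, and $\max|p'|\le N(L_\mu+1)\binom{}{}$-type bounds combined with the $N^{N/2}$ factor (which comes from a Landau–Mahler bound on the product of root moduli, or equivalently from bounding $\|p\|_2$ in terms of $\|p\|_\infty$) give $1\le(1-\gamma^\star)\cdot 2N^{N/2+2}(L_\mu+1)^N$, i.e. $1-\gamma^\star\ge\eta_\mu$. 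I would need to be careful that the exponent $N/2+2$ and the constant $2$ are exactly what the homogeneity of these estimates produces; this is the routine-but-delicate calculation, and it is essentially identical to the one behind Theorem~4.6 of \citet{grand2023reducing}, with the only change being that $\tilde\mu$ (cleared via $D_{\tilde\mu}$) replaces a coordinate indicator, which merely multiplies the coefficient bound by $D_{\tilde\mu}$ and is already reflected in the definition of $L_\mu$. Finally, taking the max over the at most $|\Sigma|^2$ pairs preserves the strict inequality since each $\gamma_\mu(\pi,\pi')<1-\eta_\mu$ and there are finitely many of them.
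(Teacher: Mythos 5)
There is a genuine gap at exactly the step you flagged as the ``main obstacle,'' and it is not just a matter of constants. Your root-separation mechanism is a Liouville-type bound: you want $q(1)$ (after clearing denominators) to be a nonzero integer, so $|q(1)|\ge 1$, and then a mean-value estimate pushes roots away from $1$. But for the polynomial you construct this never applies: in the time-augmented MDP every $P_\pi$ is row-stochastic (the $\mathtt{absorb}$ state makes it so), hence $\det(I-P_\pi)=0$ at $\gamma=1$ for \emph{every} deterministic $\pi$, and since both terms of $q(\gamma)=\tilde\mu^{\top}\operatorname{adj}(I-\gamma P_\pi)r_\pi\,\det(I-\gamma P_{\pi'})-\tilde\mu^{\top}\operatorname{adj}(I-\gamma P_{\pi'})r_{\pi'}\,\det(I-\gamma P_\pi)$ carry one of these determinants as a factor, $q(1)=0$ identically --- regardless of whether $J_1(\pi)=J_1(\pi')$. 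So ``$q(1)\in\mathbb Z\setminus\{0\}$'' is false, and your inequality $1\le(1-\gamma^\star)\max|q'|$ never gets off the ground (note also that if it did, it would give a bound of order $1/(N(L_\mu+1))$, far better than $\eta_\mu$ --- a sign that this is not the mechanism behind the stated constant). The paper turns this vanishing into the key fact rather than the obstruction: since $p(1)=0$ always (Lemma A.3 of \citet{grand2023reducing}), the point $1$ and the point $\gamma_\mu(\pi,\pi')\in[0,1)$ (when the defining set is nonempty) are two \emph{distinct roots} of the integer polynomial $\tilde p=m^{2|\tilde{\mcS}|}D_{\tilde\mu}\,p$, and a Mahler-type separation bound between distinct roots of an integer polynomial of degree $\le N$ with coefficient sum $\le L_\mu$ (Theorem A.8 of \citet{grand2023reducing}) gives $\gamma_\mu(\pi,\pi')\le 1-\eta_\mu$; positivity of $d(\gamma,\pi)$ on $[0,1)$ (Lemma A.2) ensures the roots of $J_\gamma(\pi)-J_\gamma(\pi')$ in $[0,1)$ are exactly roots of $\tilde p$. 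To repair your argument you would either have to adopt this two-root separation, or divide out the $(1-\gamma)$ factors first --- which destroys the coefficient bookkeeping that defines $L_\mu$.

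A secondary issue: your coefficient bound invokes a Leibniz expansion with ``$|\tilde{\mcS}|!\le 4^{|\tilde{\mcS}|}\cdot$(poly),'' which is false ($n!$ outgrows any exponential). The $2^{|\tilde{\mcS}|}$ and hence $4^{|\tilde{\mcS}|}$ factors in $L_\mu$ come instead from bounding the coefficient-sum norm of each determinant by the product of the coefficient-sum norms of its rows (Propositions A.5--A.7 of \citet{grand2023reducing}), with $D_{\tilde\mu}$ entering only linearly through $\bar n(\cdot,\pi)=\sum_s\tilde\mu(s)\,n(\cdot,s,\pi)$; that part of your plan needs to be rerouted through those norm-multiplicativity bounds rather than a term-count of the Leibniz formula.
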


\begin{proof}
All objects are on the augmented state space $\tilde{\mcS}$. By Cramer's rule (Lemma A.1 of \cite{grand2023reducing}), for any deterministic $\pi$ we have
\[
v_\gamma^\pi(s)=\frac{n(\gamma,s,\pi)}{d(\gamma,\pi)},\qquad
d(\gamma,\pi)=\det(I-\gamma P_\pi),\qquad
n(\gamma,s,\pi)=\det\big(M(\gamma,s,\pi)\big),
\]
where $M(\gamma,s,\pi)$ is formed by replacing the $s$-th column of $I-\gamma P_\pi$ by $r_\pi$. Writing $\bar n(\gamma,\pi):=\sum_{s\in\tilde{\mcS}}\tilde{\mu}(s)\,n(\gamma,s,\pi)$, we get
\[
J_\gamma(\pi)=\frac{\bar n(\gamma,\pi)}{d(\gamma,\pi)},\qquad
J_\gamma(\pi)-J_\gamma(\pi')
=\frac{p(\gamma)}{d(\gamma,\pi)d(\gamma,\pi')},
\]
with
\[
p(\gamma):=\bar n(\gamma,\pi)\,d(\gamma,\pi')-\bar n(\gamma,\pi')\,d(\gamma,\pi).
\]
By Lemma~A.2 of \citet{grand2023reducing}, $d(\gamma,\pi)>0$ on $[0,1)$ and by Lemma~A.3, $p(1)=0$. Since $\deg \bar n\le |\tilde{\mcS}|-1$ and $\deg d\le|\tilde{\mcS}|$, we have $\deg p\le N:=2|\tilde{\mcS}|-1$.

By Proposition~A.6 of \cite{grand2023reducing}, $m^{|\tilde{\mcS}|}n(\cdot,s,\pi)$ has integer coefficients and
\[
\sum_{k=0}^{N}\Big|[X^k]\big(m^{|\tilde{\mcS}|}n(\cdot,s,\pi)\big)\Big|\ \le\ |\tilde{\mcS}|\,r_\infty\,m^{|\tilde{\mcS}|}\,2^{|\tilde{\mcS}|}.
\]
Thus $m^{|\tilde{\mcS}|}D_{\tilde{\mu}}\,\bar n(\cdot,\pi)=\sum_s \big(D_{\tilde{\mu}}\tilde{\mu}(s)\big)\,m^{|\tilde{\mcS}|}n(\cdot,s,\pi)$
has integer coefficients and coefficient-sum at most $ D_{\tilde{\mu}}\,|\tilde{\mcS}|\,r_\infty\,m^{|\tilde{\mcS}|}\,2^{|\tilde{\mcS}|}$.
By Proposition~A.5 of \cite{grand2023reducing}, $m^{|\tilde{\mcS}|}d(\cdot,\pi)$ has integer coefficients and
\[
\sum_{k=0}^{N}\Big|[X^k]\big(m^{|\tilde{\mcS}|}d(\cdot,\pi)\big)\Big|\ \le\ m^{|\tilde{\mcS}|}\,2^{|\tilde{\mcS}|}.
\]
Applying Proposition~A.7 of \cite{grand2023reducing} to the two products defining $p(\gamma)$ and summing, we obtain that
\[
\tilde p(\gamma):=m^{2|\tilde{\mcS}|}D_{\tilde{\mu}}\,p(\gamma)
\]
has integer coefficients and
\[
\sum_{k=0}^{N}\big|\,[X^k]\tilde p\,\big|
\ \le\
2\,\big(D_{\tilde{\mu}}\,|\tilde{\mcS}|\,r_\infty\,m^{|\tilde{\mcS}|}\,2^{|\tilde{\mcS}|}\big)\cdot\big(m^{|\tilde{\mcS}|}2^{|\tilde{\mcS}|}\big)
\ =\ L_\mu.
\] 
The degree of $\tilde p$ is at most $N$ and $\tilde p$ shares roots with $p$. By Theorem~A.8 in \cite{grand2023reducing}, any two distinct roots of an integer-coefficient degree-$N$ polynomial with coefficient-sum $\le L_\mu$ are at distance at least $\eta_\mu=\big[2\,N^{\,N/2+2}\,(L_\mu+1)^N\big]^{-1}$. If the set in the definition of $\gamma_\mu(\pi,\pi')$ is empty, then $\gamma_\mu(\pi,\pi')=0$ and the claim holds trivially. Otherwise, $1$ and $\gamma_\mu(\pi,\pi')\in[0,1)$ are distinct roots, hence $\gamma_\mu(\pi,\pi')\le 1-\eta_\mu$. Maximizing over $\pi,\pi'\in\Sigma$ gives $\bar\gamma<1-\eta_\mu$.
\end{proof}

\begin{corollary}
\label{cor:restricted-class}
For any $\Sigma'\subseteq\Sigma$, the same bound holds with $\bar\gamma$ replaced by
$\max_{\pi,\pi'\in\Sigma'}\gamma_\mu(\pi,\pi')$.
\end{corollary}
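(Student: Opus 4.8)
The plan is to observe that this corollary is a pure monotonicity statement: the quantity $\gamma_\mu(\pi,\pi')$ depends only on the ordered pair of policies $(\pi,\pi')$ and on the augmented MDP $\tilde{M}=(\tilde{\mcS},\mcA,\tilde P,\tilde r,\tilde{\mu})$, never on whatever admissible class happens to contain them. Indeed, unwinding the definition, $\gamma_\mu(\pi,\pi')$ is the largest $\gamma\in[0,1)$ with $\tilde{\mu}^{\top}(v_\gamma^\pi-v_\gamma^{\pi'})=0$ (or $0$ under the stated convention), and $v_\gamma^\pi,v_\gamma^{\pi'}$ are fixed by $\tilde P,\tilde r$ alone. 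Hence for any $\Sigma'\subseteq\Sigma$ the set of pairs over which we maximize is contained in the set of pairs for $\Sigma$, so
\[
\max_{\pi,\pi'\in\Sigma'}\gamma_\mu(\pi,\pi')\ \le\ \max_{\pi,\pi'\in\Sigma}\gamma_\mu(\pi,\pi')\ =\ \bar\gamma\ <\ 1-\eta_\mu,
\]
the last inequality being exactly Theorem~\ref{thm:mu-weighted-Blackwell-gap}. I would note in passing that the convention assigning $\gamma_\mu(\pi,\pi')=0$ when the relevant set is empty or $J_\gamma(\pi)\equiv J_\gamma(\pi')$ is the same in both maxima, so no care is needed there.

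Alternatively---and this is the version I would actually write, since it makes the uniformity transparent---I would point out that the proof of Theorem~\ref{thm:mu-weighted-Blackwell-gap} establishes the bound $\gamma_\mu(\pi,\pi')\le 1-\eta_\mu$ \emph{pairwise}, for every pair of deterministic stationary policies $\pi,\pi'$ on $\tilde{M}$ (the greedification structure of $\Sigma$ is never invoked in that argument): the polynomial $\tilde p(\gamma)=m^{2|\tilde{\mcS}|}D_{\tilde{\mu}}\,p(\gamma)$ always has integer coefficients, degree at most $N=2|\tilde{\mcS}|-1$, and coefficient-sum at most $L_\mu$, and the root-separation estimate (Theorem~A.8 of \citet{grand2023reducing}) then separates the root $\gamma_\mu(\pi,\pi')\in[0,1)$ from the root at $1$ by at least $\eta_\mu$. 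Since $N$, $L_\mu$, and $\eta_\mu$ are functions of $m$, $r_\infty$, $|\tilde{\mcS}|$, and $D_{\tilde{\mu}}$ only---none of which changes when we pass to $\Sigma'$---and since every $\pi\in\Sigma'$ is such a deterministic stationary policy, taking the maximum over $\Sigma'$ yields the identical conclusion $\max_{\pi,\pi'\in\Sigma'}\gamma_\mu(\pi,\pi')<1-\eta_\mu$.

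I do not expect a genuine obstacle here; the only thing to be careful about is the bookkeeping observation that $\gamma_\mu(\cdot,\cdot)$ and all the constants in the statement are intrinsic to $\tilde{M}$ and hence class-independent, so that shrinking the admissible class can only remove pairs from the maximization and therefore cannot increase it. Everything else is quoted directly from Theorem~\ref{thm:mu-weighted-Blackwell-gap}.
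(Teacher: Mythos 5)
Your argument is correct and coincides with the paper's intended justification: the corollary follows immediately from Theorem~\ref{thm:mu-weighted-Blackwell-gap} because $\gamma_\mu(\pi,\pi')$ and the constants $N$, $L_\mu$, $\eta_\mu$ are intrinsic to $\tilde{M}$, so maximizing over a subclass $\Sigma'\subseteq\Sigma$ can only decrease the quantity being bounded (the paper itself notes that restricting the class ``only makes the bound easier''). Your second, pairwise reading of the theorem's proof is a fine elaboration but adds nothing beyond the same observation.
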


\begin{assumption}\label{assappx:rlvr}
There exists a termination action $a_{\mathrm{term}}\in\mcA$ (e.g., $\eos$), an absorbing state $s_{\mathrm{abs}}\in\mcS$, and a goal set $G\subseteq\mcS$ such that for all $s\in\mcS$:
\begin{enumerate}
    \item $r(s,a)=0$ for all $a\neq a_{\mathrm{term}}$;
    \item taking $a_{\mathrm{term}}$ transitions to the absorbing state, i.e. $P(s_{\mathrm{abs}}\mid s,a_{\mathrm{term}})=1$;
    \item the terminal reward is deterministic and binary,
$r(s,a_{\mathrm{term}})=\mathbb{I}\{s\in G\}\in\{0,1\}$.
Moreover, the absorbing state yields no further reward and transitions to itself: for all $a\in\mcA$,
\[
r(s_{\mathrm{abs}},a)=0,\qquad P(s_{\mathrm{abs}}\mid s_{\mathrm{abs}},a)=1.
\]
\end{enumerate}
\end{assumption}

Let $\tau\le H$ be the (first) absorption time. Define the success probability and (conditional) successful-path length
\[
  p(\pi)=\mathbb{P}_{\pi,\mu}(\text{success within }H),
  \qquad
  L(\pi)=\EE_{\pi,\mu}[\tau\mid\text{success}],
\]
with the convention that $L(\pi)$ is only evaluated when $p(\pi)>0$. Call $\pi$ a \emph{shortest-path policy} if it maximizes $p(\pi)$ and, among all maximizers of $p$, minimizes $L(\pi)$. If $p_{\star}\coloneqq\max_\pi p(\pi)=0$, the shortest-path condition reduces to the first criterion.

\begin{lemma}\label{lem:uniform-expansion}
Let $\varepsilon=1-\gamma$. For every policy $\pi$,
\[
  J_\gamma(\pi)
  = \mathbb{E}_{\pi,\mu}\big[\gamma^{\tau-1}\mathbf{1}\{\text{success}\}\big]
  = p(\pi)\Big(1-\varepsilon\,(L(\pi)-1)\Big) + R_\pi(\varepsilon),
\]
with remainder satisfying the uniform bound $|R_\pi(\varepsilon)|\le C_H\,\varepsilon^2$, where $C_H\coloneqq\tfrac12(H-1)(H-2)$.
\end{lemma}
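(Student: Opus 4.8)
The plan is to first collapse $J_\gamma(\pi)$ to a single discounted term using the RLVR reward structure, then Taylor-expand the discount factor in $\varepsilon=1-\gamma$ about $\varepsilon=0$, and finally bound the quadratic tail uniformly over policies and trajectories. For the first step, note that $r(S_t,a)=0$ for every $a\neq\eos$ while $r(S_t,\eos)=\mathbb{I}\{S_t\text{ contains a correct answer}\}$, so along any trajectory the sum $\sum_{t=1}^H\gamma^{t-1}R_t$ has at most one nonzero summand: it equals $1$ at the step the verifier is triggered (either $\eos$ is emitted, or time $H$ is reached via the forced terminal transition) provided the current sequence is correct, and is $0$ otherwise. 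Writing $\tau\le H$ for that step, the return is $\gamma^{\tau-1}\mathbf{1}\{\text{success}\}$ pathwise; taking $\mathbb{E}_{\pi,s}$ and integrating against $\mu$ gives the first claimed equality $J_\gamma(\pi)=\mathbb{E}_{\pi,\mu}[\gamma^{\tau-1}\mathbf{1}\{\text{success}\}]$. If $p(\pi)=0$ then $\mathbf{1}\{\text{success}\}=0$ almost surely, both sides of the lemma vanish, and $R_\pi\equiv 0$, so we may assume $p(\pi)>0$.

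Next I would substitute $\gamma=1-\varepsilon$ and apply Taylor's theorem to $f_k(\varepsilon)=(1-\varepsilon)^k$ with $k=\tau-1\in\{0,\dots,H-1\}$: there is $\xi\in(0,\varepsilon)$ (depending on the trajectory through $\tau$) with $f_k(\varepsilon)=1-k\varepsilon+\tfrac12 f_k''(\xi)\varepsilon^2$, where $f_k''(\xi)=k(k-1)(1-\xi)^{k-2}$. Applying this identity inside $\mathbb{E}_{\pi,\mu}[\,\cdot\,\mathbf{1}\{\text{success}\}]$ and using $\mathbb{E}_{\pi,\mu}[(\tau-1)\mathbf{1}\{\text{success}\}]=p(\pi)\big(L(\pi)-1\big)$ — which is just $p(\pi)$ times the definition $L(\pi)=\mathbb{E}_{\pi,\mu}[\tau\mid\text{success}]$ — the constant and linear parts produce exactly $p(\pi)\big(1-\varepsilon(L(\pi)-1)\big)$, while the quadratic part is $R_\pi(\varepsilon)=\tfrac12\,\mathbb{E}_{\pi,\mu}\!\big[f_{\tau-1}''(\xi)\,\mathbf{1}\{\text{success}\}\big]\,\varepsilon^2$.

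Finally I would bound the remainder uniformly. For $k\le 1$ one has $f_k''\equiv 0$; for $k\ge 2$ and $\xi\in(0,\varepsilon)\subseteq(0,1)$ (which holds since $\gamma\in[0,1)$, i.e. $\varepsilon\in(0,1]$) we have $0\le (1-\xi)^{k-2}\le 1$, hence $|f_k''(\xi)|\le k(k-1)\le (H-1)(H-2)$. Since $\mathbf{1}\{\text{success}\}\le 1$, this yields $|R_\pi(\varepsilon)|\le \tfrac12(H-1)(H-2)\,\varepsilon^2 = C_H\varepsilon^2$, uniformly in $\pi$, completing the proof.

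The argument is elementary; the only points needing care are the bookkeeping in the first step — matching $\tau$ with the index of the unique nonzero reward so that the exponent is exactly $\tau-1$, and handling the case where $\eos$ is never emitted — and the observation in the last step that the possibly negative base $1-\xi$ raised to the power $k-2$ never blows up, because that power is nonnegative precisely when $f_k''\not\equiv 0$. I do not expect any genuine obstacle; alternatively one could use the exact binomial identity $\gamma^{\tau-1}=\sum_{j\ge 0}\binom{\tau-1}{j}(-\varepsilon)^j$ and bound the $j\ge 2$ tail termwise, but the Lagrange remainder form is cleaner.
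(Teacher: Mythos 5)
Your proposal is correct and follows essentially the same route as the paper's proof: collapse the return to $\mathbb{E}_{\pi,\mu}[\gamma^{\tau-1}\mathbf{1}\{\text{success}\}]$ using the binary terminal reward, Taylor-expand $(1-\varepsilon)^{\tau-1}$ with the Lagrange remainder, and bound the quadratic term by $\tfrac12(\tau-1)(\tau-2)\varepsilon^2\le C_H\varepsilon^2$ uniformly. Your extra care with the $\tau-1\le 1$ edge case and the $p(\pi)=0$ case is fine but does not change the argument.
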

\begin{proof}
Since the reward is $1$ only upon successful termination at time $\tau$,
\[
J_\gamma(\pi)=\mathbb{E}_{\pi,\mu}\!\left[\gamma^{\tau-1}\mathbf{1}\{\text{success}\}\right]
= p(\pi)\,\mathbb{E}\!\left[(1-\varepsilon)^{\tau-1}\mid \text{success}\right],
\]
where $\varepsilon = 1-\gamma$.
For any integer $n\in\{0,\dots,H-1\}$, Taylor’s theorem gives that for some $\xi\in(0,\varepsilon)$,
\[
(1-\varepsilon)^n
= 1-n\varepsilon+\tfrac12 n(n-1)(1-\xi)^{\,n-2}\varepsilon^2,
\]
which, since $\xi \in [0,1)$, implies 
\[
\big|(1-\varepsilon)^n-(1-n\varepsilon)\big|\le \tfrac12 n(n-1)\varepsilon^2.
\]
Setting $n=\tau-1\in\{0,\dots,H-1\}$ and conditioning on success yields
\[
\mathbb{E}\!\left[(1-\varepsilon)^{\tau-1}\mid\text{success}\right]
= 1-\varepsilon\,(L(\pi)-1) + \mathbb{E}\!\left[\delta_{\tau-1}(\varepsilon)\mid\text{success}\right],
\]
with $|\delta_{\tau-1}(\varepsilon)|\le \tfrac12(\tau-1)(\tau-2)\varepsilon^2\le C_H\varepsilon^2$. Define $R_\pi(\varepsilon)\coloneqq p(\pi)\,\mathbb{E}[\delta_{\tau-1}(\varepsilon)\mid\text{success}]$ to conclude.
\end{proof}

\begin{theorem}\label{thm:bw-shortest-path}
In finite-horizon MDPs with a deterministic binary terminal verifier reward (Assumption~\ref{assappx:rlvr}), every Blackwell optimal policy is a shortest path policy:
\[
  \Pi^\star_{\mathrm{bw}}\subseteq\argmin_{\pi\in\Pi_{\max p}} L(\pi),
  \quad \text{ where } \quad
  \Pi_{\max p}=\argmax_{\pi\in\Pi} p(\pi).
\]
\end{theorem}
\begin{proof}
Let $\pi^\star\in\Pi^\star_{\mathrm{bw}}$. For any $\pi\in\Pi$ and $\varepsilon=1-\gamma$, Lemma~\ref{lem:uniform-expansion} gives
\begin{align*}
J_\gamma(\pi^\star)-J_\gamma(\pi)
  &= \underbrace{p(\pi^\star)-p(\pi)}_{\text{(A)}}
    - \varepsilon\,\underbrace{\big(p(\pi^\star)(L(\pi^\star)-1)-p(\pi)(L(\pi)-1)\big)}_{\text{(B)}}
    + \underbrace{R_{\pi^\star}(\varepsilon)-R_\pi(\varepsilon)}_{\text{(C)}},
\end{align*}
with $|\text{(C)}|\le2C_H\varepsilon^2$. If $p(\pi)>p(\pi^\star)$, then for sufficiently small $\varepsilon>0$ the RHS is negative, contradicting optimality of $\pi^\star$ for $\gamma$ arbitrarily close to $1$. Hence $p(\pi^\star)\ge p(\pi)$ for all $\pi$, i.e., $\pi^\star\in\Pi_{\max p}$.

Now fix any $\pi\in\Pi_{\max p}$ so that $p(\pi)=p(\pi^\star)= p_\star$. If $L(\pi)<L(\pi^\star)$ then $\text{(B)}=p_{\star}(L(\pi^\star)-L(\pi))>0$ and for small enough $\varepsilon$ the negative term $-\varepsilon\,\text{(B)}$ dominates the $O(\varepsilon^2)$ remainder, again contradicting optimality. Therefore $L(\pi^\star)\le L(\pi)$ for all $\pi\in\Pi_{\max p}$.

The same argument applies with $\Pi$ replaced by any subclass (e.g., the finite deployment class $\Sigma$).
\end{proof}

\begin{corollary}\label{cor:sigma-shortest}
In the time-augmented reasoning MDP, the Blackwell-optimal deployed policies satisfy
\[
  \Sigma^\star_{\mathrm{bw}}=\arg\min_{\sigma\in\Sigma_{\max p}} L(\sigma),
  \qquad \Sigma_{\max p}\coloneqq\arg\max_{\sigma\in\Sigma} p(\sigma).
\]
Equivalently, for $\gamma$ sufficiently close to $1$, the $\gamma$-discounted optimal policies in $\Sigma$ are exactly the shortest successful-path policies.
\end{corollary}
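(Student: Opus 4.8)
The plan is to obtain this statement by combining Theorem~\ref{thm:bw-shortest-path}, applied to the deployment class $\Sigma$ inside the time-augmented MDP, with Corollary~\ref{cor:bw-sigma}. First I would verify that $\tilde M=(\tilde{\mcS},\mcA,\tilde P,\tilde r,\tilde{\mu})$ still meets the hypotheses of Theorem~\ref{thm:bw-shortest-path}: its transitions are deterministic, its rewards are the binary verifier rewards inherited from $M$, and although $\tilde M$ is nominally infinite-horizon, every trajectory is absorbed in $\mathtt{absorb}$ by step $H$, so the first absorption time obeys $\tau\le H$ and $J_\gamma(\sigma)=\EE_{\sigma,\tilde{\mu}}[\gamma^{\tau-1}\mathbf{1}\{\text{success}\}]$. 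The time-augmentation $\pi\mapsto\tilde\pi$ is a bijection between finite-horizon nonstationary policies on $M$ and stationary policies on $\tilde M$ that preserves trajectory laws, so $p(\sigma)$, $L(\sigma)$, and $J_\gamma(\sigma)$ coincide with the corresponding original quantities (the objective equivalence is already recorded in the Setup paragraph preceding Theorem~\ref{thm:mu-weighted-Blackwell-gap}). Hence the proof of Theorem~\ref{thm:bw-shortest-path}, whose last sentence explicitly permits $\Pi$ to be replaced by $\Sigma$, applies verbatim on $\tilde M$ and yields $\Sigma^\star_{\mathrm{bw}}=\argmin_{\sigma\in\Sigma_{\max p}}L(\sigma)$ with $\Sigma_{\max p}=\argmax_{\sigma\in\Sigma}p(\sigma)$, which is the first display.

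For the second assertion I would invoke finiteness of $\Sigma$ (Lemma~\ref{lem:sigma-finite}) so that Corollary~\ref{cor:bw-sigma} (i.e.\ Theorem~\ref{thm:bw-finite-Pi} with $\Pi\leftarrow\Sigma$) supplies a threshold $\gamma_{\mathrm{bw}}(\Sigma)<1$ with $\argmax_{\sigma\in\Sigma}J_\gamma(\sigma)=\Sigma^\star_{\mathrm{bw}}$ for all $\gamma\in(\gamma_{\mathrm{bw}}(\Sigma),1)$. Chaining this identity with the description of $\Sigma^\star_{\mathrm{bw}}$ from the first paragraph shows that for every such $\gamma$ the $\gamma$-discounted optimal policies in $\Sigma$ are exactly the shortest successful-path policies in the deployment class. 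The degenerate case $p_\star=\max_{\sigma\in\Sigma}p(\sigma)=0$ is handled as in Theorem~\ref{thm:bw-shortest-path}: both characterizations collapse to $\Sigma_{\max p}=\Sigma$ and the length condition is vacuous.

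Since this is a corollary, I do not anticipate a genuine obstacle; the only point needing care is the bookkeeping for the time-augmentation, namely confirming that the event of succeeding within $H$ steps, the absorption time $\tau$, and hence $p(\cdot)$ and $L(\cdot)$ transfer unchanged under $\pi\mapsto\tilde\pi$, and that the $\gamma$-discounted objectives agree on $M$ and $\tilde M$. With that in place, the argument is the two-step chain above: Theorem~\ref{thm:bw-shortest-path} on $\Sigma$ for the structural characterization of $\Sigma^\star_{\mathrm{bw}}$, then Corollary~\ref{cor:bw-sigma} to identify $\Sigma^\star_{\mathrm{bw}}$ with the optimizers for $\gamma$ close to $1$.
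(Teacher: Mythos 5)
Your proposal is correct and is essentially the paper's intended argument: the corollary is stated without a separate proof precisely because it is Theorem~\ref{thm:bw-shortest-path} with $\Pi$ replaced by the deployment class $\Sigma$ (whose hypotheses transfer to the time-augmented MDP exactly as you check), combined with Lemma~\ref{lem:sigma-finite} and Corollary~\ref{cor:bw-sigma} to identify $\Sigma^\star_{\mathrm{bw}}$ with the $\gamma$-optimal sets for $\gamma$ near $1$. Your bookkeeping on the time-augmentation and the degenerate case $p_\star=0$ matches the paper's conventions, so no further work is needed.
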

 
\end{document}